\author[1]{Ingvar Ziemann}
\author[2]{Henrik Sandberg}
\affil[1]{University of Pennsylvania}
\affil[2]{KTH Royal Institute of Technology}
\numberwithin{equation}{section}
\date{}
\newcommand{\e}{\varepsilon}
\newcommand{\E}{\mathbf{E}}
\DeclareMathOperator{\spn}{span}
\DeclareMathOperator{\tr}{tr}
\DeclareMathOperator{\VEC}{\mathsf{vec}}
\DeclareMathOperator{\dop}{\mathsf{D}}
\DeclareMathOperator{\I}{\mathtt{I}}
\DeclareMathOperator{\J}{\mathtt{J}}
\newcommand{\diff}{\mathtt{d}}
\DeclareMathOperator{\del}{\mathsf{\partial}}
\newcommand{\T}{\mathsf{T}}
\newcommand{\Z}{\mathbb{Z}}
\newcommand{\dx}{d_{\mathsf{X}}}
\newcommand{\dy}{d_{\mathsf{Y}}}
\newcommand{\dz}{d_{\mathsf{Z}}}
\newcommand{\du}{d_{\mathsf{U}}}
\newcommand{\R}{\mathbb{R}}
\newcommand{\N}{\mathbb{N}}
\renewcommand{\Pr}{\mathbf{P}}
\newcommand{\iid}{iid}
\newcommand{\opnorm}[1]{\| #1 \|_{\mathsf{op}}}
\newcommand{\bigopnorm}[1]{\left\| #1 \right\|_{\mathsf{op}}}
\newcommand{\sym}{\mathrm{sym}}
\newtheorem{definition}{Definition}[section] 
\newtheorem{theorem}{Theorem}[section] 
\newtheorem{proposition}{Proposition}[section] 
\newtheorem{corollary}{Corollary}[section] 
\newtheorem{remark}{Remark}[section]
\newtheorem{lemma}{Lemma}[section]
\title{Regret Lower Bounds for Learning Linear Quadratic Gaussian Systems}
\begin{document}

\maketitle

\begin{abstract}
    We establish regret lower bounds for adaptively controlling an unknown linear Gaussian system with quadratic costs. We combine ideas from experiment design, estimation theory and a perturbation bound of certain information matrices to derive regret lower bounds exhibiting scaling on the order of magnitude $\sqrt{T}$ in the time horizon $T$.  Our bounds accurately capture the role of control-theoretic parameters and we are able to show that systems that are hard to control are also hard to learn to control; when instantiated to state feedback systems we recover the dimensional dependency of earlier work but with improved scaling with system-theoretic constants such as system costs and Gramians. Furthermore, we extend our results to a class of partially observed systems and demonstrate that systems with poor observability structure also are hard to learn to control.
\end{abstract}
\newpage
\tableofcontents
\newpage
\section{Introduction}

Learning algorithms are set to play an increasing role in modern engineering solutions. Early successes include walking robots \citep{yang2020data} and playing repeated games such as Go \citep{silver2017mastering} and are likely to become increasingly important in modern safety-critical infrastructure such as smart grids and intelligent transportation. However, their emergence in safety-critical systems is not without problems. Indeed, one of the hallmarks of these early successes is abundant data from a relatively unchanging source, potentially even through simulation access. By contrast, their failure modes as a component of modern engineering systems in dynamic, changing, environments when data is scarce remain poorly understood.

Such an understanding must necessarily be based on two components: \emph{i.} the study of fundamental performance limitations, which no algorithm can excede, and \emph{ii.} the provision of algorithms which match these fundamental limitations.  In this work, we focus on the first component and provide an information-theoretic framework for understanding the fundamental limits of adaptive controlling an a priori unknown linear Gaussian system subject to a quadratic cost function. We emphasize the interaction between the hardness of learning and the hardness of control by deriving regret lower bounds that accurately capture various system-theoretic constants: our bounds demonstrate that linear systems with poor controllability (or observability) structure are hard to learn to control.


\subsection{Problem Formulation}
 We study the fundamental limitations to adaptively learning to control  the following parametric system model:
\begin{equation}
\begin{aligned}
\label{eq:lds_ac}
X_{t+1}& = A(\theta) X_t + B(\theta) U_t + W_t, & X_0 \sim \mathsf{N}(0,\Gamma_{0}), & & t=0,1,\dots
\end{aligned}
\end{equation}
parameterized by $\theta\in \mathbb{R}^{d_\Theta}$---the joint law of 
the $X_t$ is parametrized by $\theta$.

We will also consider the extension to partially observed systems, in which the controller is constrained to rely on past and present observations of the form
\begin{align}
\label{eq:plds_ac}
Y_t&=C(\theta) X_t +V_t.
\end{align}
 The noise processes $W_t$ and $V_T$ are assumed mutually independent iid sequences of mean zero Gaussian noise with fixed covariance matrices $\Sigma_W\succeq 0$ and $\Sigma_V \succeq 0$ respectively.  Above $X_t \in \mathbb{R}^{\dx}$ and $Y_t \in \mathbb{R}^{\dy}$ respectively denote the observations available to the learner, which are obtained in a sequential fashion after applying the input $U_{t-1} \in \mathbb{R}^{\du}$ according to \eqref{eq:lds_ac} and \eqref{eq:lds_ac}-\eqref{eq:plds_ac} in the case of partially observed sytems.  The matrices $A(\theta) \in \mathbb{R}^{\dx\times\dx},B(\theta)\in\mathbb{R}^{\dx\times\du},C(\theta)\in \mathbb{R}^{\dy\times\dx}$ are assumed to be known continuously differentiable functions of the unknown parameter, $\theta \in \mathbb{R}^{d_\Theta}$. The unstructured case, to which much attention has been devoted in the literature, is recovered by setting $\VEC \begin{bmatrix} A(\theta) & B(\theta) & C(\theta) \end{bmatrix}=\theta$. We further denote by $\mathcal{Y}_t$ the sigma-field generated by the observations of $Y_1,\dots,Y_t$ and possible auxiliary randomization, $\mathtt{AUX}$, a random variable with density against the $d_{\mathtt{AUX}}$-dimensional Lebesque measure. We also point out that the state feedback (SF) case \eqref{eq:lds_ac} can be recovered from the general partially observable (PO) case \eqref{eq:lds_ac}-\eqref{eq:plds_ac} by setting $C(\theta)=I_{\dx}$ and $\Sigma_V=0_{\dx\times \dx}$. In this case, the nondegeneracy condition above simplifies to $\Sigma_W \succ 0$.

The learner is to design a policy $\pi$, constituted by a sequence of conditional laws of the variables $U_t$ given $\mathcal{Y}_t$. We will say that a policy $\pi$ is admissible if the process $\{U_t\}_{t=0,1,\dots}$ is adapted to the filtration $\{\mathcal{Y}_t\}_{t=0,1,\dots}$. Roughly, the learner's goal is to choose such an admissible policy $\pi$ as to minimize the cumulative cost 
\begin{align*}
\mathsf{V}_T^{\pi}(\theta) &\triangleq \E_\theta^\pi \left[ \sum_{t=0}^{T-1}\left(  X_t^\T Q X_t +U_t^\T R U_t\right) + X_T^\T Q_T(\theta) X_T \right],
\end{align*}
where we have fixed two known positive definite cost weighting matrices $Q,R \succ 0$, $Q \in \mathbb{R}^{\dx\times \dx}, R \in \mathbb{R}^{\du\times \du}$ and a terminal cost matrix $Q_T(\theta) \succeq 0$. An equivalent formulation is to minimize the cumulative  suboptimality due to not knowing the true parameter $\theta$, namely the regret:
\begin{equation}
\begin{aligned}
\label{regdef}
\mathsf{R}_T^\pi(\theta)&\triangleq \underbrace{\E_\theta^\pi \left[ \sum_{t=0}^{T-1}\left(  X_t^\T Q X_t +U_t^\T R U_t\right) + X_T^\T Q_T(\theta) X_T \right]}_{\mathsf{V}_T^\pi(\theta)}\\
&-\underbrace{\inf_{\pi}\E_\theta^\pi \left[ \sum_{t=0}^{T-1}\left(  X_t^\T Q X_t +U_t^\T R U_t\right) + X_T^\T Q_T(\theta) X_T \right]}_{\mathsf{V}_T^{\star}(\theta)}.
\end{aligned}
\end{equation}
Whenever $\theta = \VEC [A \ B \ C]$, we will also allow ourselves the  abuse of notation $ \mathsf{R}_T^\pi\left( A , B , C \right)=\mathsf{R}_T^\pi(\theta)$.

In principle, there is no learning involved in the above formulation since the optimal policy $\pi_\star(\theta)$ is admissible. As is standard in modern statistics, we circumvent this issue by requiring that the learner performs well on a neighborhood of instances. Namely, for a fixed tolerance $\e>0$ we posit that the learner seeks to minimze $\sup_{\theta' \in B(\theta,\e)}\mathsf{R}_T^\pi(\theta')$. It is this latter quantity which we seek to study in terms of fundamental achievable performance limits.

Finally, we make the following standing assumptions about the system \eqref{eq:lds_ac}-\eqref{eq:plds_ac}: 
 \begin{enumerate}[label=\textnormal{A\arabic*}]
 \item[A1.] The  pair $(A(\theta),B(\theta))$ is stabilizable and the pair $(A(\theta),C(\theta))$ is detectable. 
 \item[A2.] The terminal cost $Q_T$ renders the optimal controller stationary; $Q_T(\theta) =P(\theta)$ where $P(\theta)$ is the solution to the discrete algebraic Riccati equation (which is reproduced in \eqref{pDARE}).
 \item[A3.] The distribution of $X_0$ renders the optimal filter stationary; $X_0 \sim \mathsf{N}(0,S(\theta))$ where $S(\theta)$ is given by \eqref{sDARE}.
 \end{enumerate}
 
 The first assumption, A1., guarantees the feasibility of the long-run averaged version of the problem at hand. Assumptions A2. and A3. are made to streamline the exposition; it is possible to derive analogous results in their absence, the only difference being that the quantities related to the Riccati equations (\ref{pDARE})  and (\ref{sDARE}) in the regret representation derived in Lemma~\ref{regretlemma} become time-varying. However, the time-varying versions of these quantities converge at an exponential rate to those used here, so the overall difference is negligible.

\subsection{Contribution}

We establish fundamental limits for adaptive control problems by proving local minimax regret lower bounds for LQR and LQG problems. Our lower bounds take the form
\begin{align}
\label{localminimaxregretcontribution}
 \inf_{\pi} \sup_{\theta' \in B(\theta,\e)  } \mathsf{R}_T^\pi(\theta') \gtrsim c(\theta,\e) \sqrt{T} 
\end{align}
for a constant $c(\theta,\e)$ that captures the instance-specific hardness of the problem, such as dependence on the dimension of the unknown parameter, $d_\theta$, and various system quantities depending on the controllability and observability structure of the instance under consideration. We remark the radius $\e$ appearing in \eqref{localminimaxregretcontribution} can be thought of as the strength of an adversary selecting the parameters against which the adaptive algorithm is to compete---we will prove slightly stronger statements in which the strength of the adversary, $\e$, is vanishing: $\e\to 0$. Put differently, there exists no controller that can perform better than the fundamental limit \eqref{localminimaxregretcontribution} uniformly in an infinitesimal neighborhood in parameter space.

We match the dimensional dependencies in prior work \citep{simchowitz2020naive}, but in contrast to it, we obtain much sharper dependencies on system-theoretic constants. This allows us to conclude that systems with poor controllability structure are much harder to learn to control. Moreover, we extend our results to a particular class of partially observed systems and show that poor observability can be analogously detrimental to learning performance. 

At a more technical level, we also introduce a new proof approach for establishing regret lower bounds. Our technique relies on linking the nullspace of the Fisher information matrix (\Cref{eq:fisherdef}) of the data collected by any policy to the regret incurred by that policy. Informally, for any policy $\pi$, this idea can be summarized as:
\begin{align*}
\textnormal{Fisher Information}(\pi) = \textnormal{Fisher Information}(\pi^\star) + O(\textnormal{Regret})
\end{align*}
where $\pi^\star$ is the optimal policy. In other words, policies which generate experiments much more informative than the optimal one must have a significant regret component. The Fisher information thus allows us to make precise the exploration-exploitation trade-off in adaptive control. We derive our lower bound \Cref{localminimaxregretcontribution} by combining this insight with a version of Van Trees' Inequality (\Cref{VTineq}).

\subsection{Related Work}

Regret minimization in the context of linear quadratic systems was first introduced by  \cite{lai1986asymptotically} and \cite{lai1986extended}, following their treatment of regret in the related multi-armed bandit problem \citep{lai1985asymptotically}, see also \cite{guo1995convergence}. The notion of regret captures that there is a trade-off between performance and information collected. This is what Feldbaum called the dual nature of control, \cite{feldbaum1960dual1,feldbaum1960dual2} or what is now known in the reinforcement learning literature as the exploration-exploitation trade-off. On the one hand, one wishes for the algorithm to perform well now, but on the other, one also needs the algorithm to be informative about the state of world, as to be able to reject for instance model misspecification or other uncertainty. Regret lower bounds describe the trade-off between the statistical rate of convergence of an adaptive algorithm and its performance on a nominal instance. Generally, such lower bounds are the consequence of requiring an algorithm to perform well on some class of models, subject to uncertainty about the nominal instance. Such an algorithm must necessarily ``explore'' to determine which model to optimize for. In turn,  such exploration leads to sub-optimal performance on the nominal instance. Key algorithmic principles based on this need for exploration date back at least to Simon's 1956 introduction of certainty equivalence \cite{simon1956dynamic} and the notion of dual control \citep{feldbaum1960dual1,feldbaum1960dual2}. When it comes to linear quadratic systems, an early reference is the self-tuning regulator by \cite{aastrom1973self}, in turn inspired by Kalman's earlier work \citep{kalman1958design}. In this context, the primary mathematical issue was first and foremost the convergence of the adaptive controller to the global optimum \citep{goodwin1981discrete, lai1982least, becker1985adaptive, campi1998adaptive}. These works provide and analyze adaptive algorithms that are asymptotically optimal on average, which one now would perhaps simply call sublinear regret.

Recently, the adaptive linear-quadratic-Gaussian problem has mainly been studied under the assumption of perfect state observability $(C=I_{\dx}, V_t =0$ identically). While the problem has a rich history, it was re-popularized by \cite{abbasi2011regret} in which the authors provide an algorithm attaining $O(\sqrt{T})$ regret. A number of works following that publication focus on improving and providing more computationally tractable algorithms in this setting \citep{ouyang2017control, dean2018regret, abeille2018improved, abbasi2019model, mania2019certainty, cohen2019learning, faradonbeh2020input,abeille2020efficient, jedra2021minimal}. While the emphasis of these works is entirely on providing upper bounds, recently, some effort has been made to understand the complexity of the problem in terms of lower bounds. Notably,  \cite{simchowitz2020naive} provides nearly matching  upper and lower bounds scaling almost correctly with the dimensional dependence given that the entire set of parameters $(A,B)$ are unknown. While \cite{simchowitz2020naive} provide lower bounds that scale correctly with the dimension of the problem \eqref{eq:lds_ac}, their bounds are rather loose in terms of system-theoretic quantities. In many situations, these can be exponentially larger than the relevant dimensional factors \citep{tsiamis2021linear}. With this in mind, our work seeks to understand the hardness of learning to control in terms of such system-theoretic quantities. To this end, we provide refined lower bounds for the state-feedback setting, which further apply to partially observed systems. Later, in our work \cite[Theorem 8]{tsiamis2022learning}, we leveraged the refined bounds of the present work to show that these "hidden" system-theoretic quantities can have exponential impact on the regret for many reasonable classes of systems. Indeed, there is still a rather large gap to be filled in the literature in terms of how control-theoretic parameters (such as gramians) affect the  scaling limits of the smallest possible regret an algorithm can incur. Motivated by this gap, we develop a theory of regret lower bounds based on Cramér-Rao type bounds which are known to yield tight lower bounds in system identification. We refer to \Cref{table:adaptsum} for schematic overview of the above mentioned results.

{
\begin{table*}
\center
\caption{Summary of Results: Regret Minimization in Adaptive Control}
\label{table:adaptsum}
\begin{tabular}{|c |c|c|c|c| } 
 \hline
 Paper & Setting & Method & Upper Bound & Lower Bound \\ 
 \hline
\cite{abbasi2011regret}  & SF & Optimism &$ \tilde O(\sqrt{T})  $ &   \\ 
\hline
\cite{dean2018regret}  &SF & CE& $\tilde O(T^{2/3})$ &  \\ 
 \hline 
 \cite{faradonbeh2020input} & & CE &  &\\
 \cite{mania2019certainty} &SF & CE  & $ \tilde O(\sqrt{T})$ &  \\ 
 \cite{cohen2019learning} &  &Optimism& &  \\ 
 \hline
 \cite{simchowitz2020naive} &SF  & CE & ${O( \sqrt{\dx \du^2 T})}$ & ${ \Omega( \sqrt{\dx \du^2 T})}$ \\ \hline
 \cite{simchowitz2020improper}&PO &Gradient& $\tilde O(\sqrt{T})$ &  \\ 
 \hline
  \Cref{sec:sf} &SF & & & $\Omega(\sqrt{\dx \du^2 T})$  \\
\hline
   \Cref{sec:po}&PO & & & $\Omega(\sqrt{T})$  \\
 \hline
   \cite{tsiamis2022learning} &SF &  & $  $ & $\Omega\left(\sqrt{\frac{1}{\dx}2^{\kappa} T}\right)$  \\ 
 \hline
\end{tabular}
\end{table*}
}

Turning to the more general situtation including partial state observability, the key references are \cite{simchowitz2020improper} and \cite{lale2020logarithmic}. \cite{simchowitz2020improper} consider a closely related but more general setting in which the noise model is (possibly) adversarial instead of Gaussian and produce a $\tilde O (\sqrt{T})$ regret algorithm.   \cite{lale2020logarithmic} consider a system of the form considered in the present work and give an algorithm attaining $\tilde O(T^{2/3})$ regret. However, they also show that when a condition related to the persistency of excitation of the benchmark law holds, their algorithm can attain polylogarithmic regret. However, it is not known when their condition holds for the optimal law. In other words, their notion of regret may differ from ours.  We will construct a negative result in this direction that satisfies A1-A3 in \Cref{ch:regretpo}; we show that there exists stabilizable and detectable systems with full rank noise on which obtaining logarithmic regret is impossible. At this point, we remark that  bounds on the scale $\log T$ have been well-known for some time \citep{nemirovski1984optimal, lai1986asymptotically, raginsky2010divergence, rantzer2018adaptive}. By contrast, lower bounds on the order $\sqrt{T}$ are a more recent phenomenon starting with \cite{simchowitz2020naive,cassel2020logarithmic,ziemann2020phase}.

In principle, the recognition that $\sqrt{T}$ regret is often unimprovable stems from the fact that the stochastic adaptive control problem is intimitely connected to parameter estimation. Already from the outset, algorithm design has to a large extent been based on certainty equivalence; that is, estimating the parameters and plugging these estimates into an optimality equation, as if they were the ground truth \citep{aastrom1973self, mania2019certainty}. Our lower bound condition is also closely related to parameter estimation. It is a consequence of the viewpoint that an adaptive controller, to be asymptotically optimal, must generate an experiment asymptotically very similar to one in which the optimal controller has generated the data. If this experiment is ``bad'' in a certain sense, logarithmic regret becomes impossible. This reasoning is akin to earlier results in experiment design and identification for control. In particular, there is a very interesting result due to \cite{gevers1986optimal} which finds that the optimal experiment for minimum variance control is to use the minimum variance controller itself. This is the  opposite of the phenomenon which more general adaptive linear-quadratic regulation problems exhibit, as noted for instance by \cite{lin1985will} and \cite{polderman1986necessity}. Here application of the optimal feedback law typically yields a singular experiment. However, even under more general circumstances, it still holds true that the optimal experiment design is  closed-loop \citep{hjalmarsson1996model}. For more on experiment design, we refer the reader to the book \cite{pukelsheim2006optimal}. 

Our lower bound condition, \emph{uninformativeness} (\Cref{uninfdef}), is related to identifiability. Actually, it is inspired by a similar phenomenon in point estimation, which may become arbitrarily hard when the Fisher information is singular \citep{rothenberg1971identification, goodrich1979necessary, stoica1982non, stoica2001parameter}. Indeed, we will see that uninformativeness allows one to draw conclusions analogous to results of \cite{polderman1986necessity} about the necessity of identifying the true parameter in adaptive control  subject to the data being generated by the optimal controller itself (which one would hope to, at least asymptotically, be close to). In principle, the condition stipulates that there is a lack of curvature (as measured by the loss geometry) in certain directions of parameter space in a neighborhood of the optimal policy. It is precisely this lack of curvature that prevents logarithmic regret, which has been reported in certain special cases of the adaptive LQG problem \citep{lai1986asymptotically, guo1995convergence}.

The proof approach for our lower bound also relies on methods pioneered in parameter estimation;  we use Van Trees' inequality \citep{van2004detection,bobrovsky1987some, gill1995applications}. This necessarily involves the Fisher information, which, quite naturally, allows for taking problem structure into account by considering different parametrizations of the problem dynamics. We also note that the idea to bound a minimax complexity by a suitable family of Bayesian problems is well-known in the statistics literature \citep{gill1995applications}. See also \cite{van2000asymptotic,tsybakov2008introduction, ibragimov2013statistical} and the references therein. We also note that \cite{raginsky2010divergence} attributes the idea to use Cramér-Rao type bounds to derive performance lower bounds in adaptive control to \cite{nemirovski1984optimal}.\footnote{The source is in Russian, and we have not been able to verify it.} Finally, we note in passing that non-singularity of the Fisher information is strongly related to the size of the smallest singular value of the empirical covariance matrix, which has been the emphasis of some recent advances in linear system identification \citep{faradonbeh2018finite, pmlr-v75-simchowitz18a,sarkar2019near,jedra2020finite,wagenmaker2021task}. In the present work, we ask analogous questions in the regret-minimization setting, in which there is a rather rich interplay between identification and control.

\subsection{Notation}
Maxima (resp.\ minima) of two numbers $a,b\in \R$ are denoted by $a\vee b =\max(a,b)$ ($a\wedge b = \min(a,b)$). For two sequences $\{a_t\}_{t\in \Z}$ and $\{b_t\}_{t\in \Z}$ we introduce the shorthand $a_t \lesssim b_t$ if there exists a universal constant $C>0$ and an integer $t_0$ such that $a_t \leq C b_t$ for every $t \geq t_0$.  Let $\mathsf{X} \subset \R^d$ and let $f,g \in \mathsf{X} \to R$. We write $f=O(g)$ if $\limsup_{x\to x_0} |f(x)/g(x)|<\infty$, where the limit point $x_0$ is typically understood from the context. We use $\tilde O$ to hide logarithmic factors and write  $f=o(g)$ if $\limsup_{x\to x_0} |f(x)/g(x)|=0$. We write  $f=\Omega(g)$ if $\limsup_{x\to x_0} |f(x)/g(x)|>0$. 

\paragraph{Euclidean Spaces}
The Euclidean norm on $\mathbb{R}^{d}$ is denoted $\|\cdot\|_2$,
and the unit sphere in $\R^d$ is denoted $\mathbb{S}^{d-1}$. The ball of radius $\e$ in $\|\cdot\|_2$, centered at $x \in \R^d$ is denoted $B(x,\e)$. The standard inner product on $\R^{d}$ is denoted $\langle\cdot,\cdot\rangle$. We embed matrices $M \in \R^{d_1\times d_2}$ in Euclidean space by vectorization: $\VEC M \in \mathbb{R}^{d_1 d_2}$, where $\VEC$ is the operator that vertically stacks the columns of $M$ (from left to right and from top to bottom). For a matrix $M$ the Euclidean norm is the Frobenius norm, i.e., $\|M\|_F\triangleq \|\VEC M\|_2$. We similarly define the inner product of two matrices $M,N$ by $\langle M, N \rangle \triangleq \langle \VEC M, \VEC N \rangle$. The transpose of a matrix $M$ is denoted by $M^\T$ and $\tr M $ denotes its trace. We define (twice) the symmetric component of $M$ by $\sym (M) \triangleq M+M^\T$. For a matrix $M \in \R^{d_1 \times d_2}$, we order its singular values $\sigma_{1}(M),\dots,\sigma_{d_1 \wedge d_2}(M)$ in descending order by magnitude. We also write $\opnorm{M}$ for its largest singular value: $\opnorm{M} \triangleq \sigma_1(M)$. To not carry dimensional notation, we will also use $\sigma_{\min}(M)$ for the smallest nonzero singular value.   For square matrices $M\in \R^{d\times d}$ with real eigenvalues, we similarly order the eigenvalues of $M$ in descending order as $\lambda_{1}(M),\dots,\lambda_{d}(M)$. In this case, $\lambda_{\min}(M)$ will also be used
to denote the minimum (possibly zero) eigenvalue of $M$. For two symmetric matrices $M, N$, we write $M \succ N$ ($M\succeq N)$ if $M-N$ is positive (semi-)definite.

\paragraph{Differentiation and Integration}
The set of $k$-times continuously differentiable functions on a subset $U$ of $\mathbb{R}^{d}$ is denoted $\mathscr{C}^k(U)$. To restrict attention to those functions in $\mathscr{C}^k(U)$ which are compactly supported, we write $\mathscr{C}_c^k(U)$. We use $\dop$ for Jacobian, $\diff$ for differential and $\nabla$ for the gradient. Expectation (resp.\ probability) with respect to all the randomness of the underlying
probability space is denoted by $\E$ (resp.\ $\Pr$). 


\section{Riccati, Regret and a Reduction to Bayesian Estimation}

We begin by recalling a number of elementary facts regarding the optimal control and filtering of the system \eqref{eq:lds_ac}-\eqref{eq:plds_ac}, valid in the case the parameter $\theta$ is fixed and A1-A3 hold. For a reference, see for instance \cite{soderstrom2002discrete}.

\paragraph{Riccati Equations.}
The expression for the linear system \eqref{eq:lds_ac}-\eqref{eq:plds_ac} and the regret (\ref{regdef}) are cumbersome to work with directly. However, these can be simplified using Riccati equations. Let us now recall, provided that $\theta=(A,B)$ is stabilizable, that the optimal policy minimizing the long-run average of $\mathsf{V}_T^\pi(\theta)$, is represented by a stabilizing feedback matrix $K(\theta)$ and can be expressed via $P(\theta)$ which together satisfy 
\begin{align}
\label{pDARE}
P &=Q +A^\T P_K A - A^\T PB (B^\T P B+R)^{-1}B^\T P A\\
\label{kDARE}
K &=-(B^\T P B+R)^{-1} (B^\T P A)
\end{align}
where dependence on $\theta$ has been omitted in \eqref{pDARE}-\eqref{kDARE} for notational brevity.

Since $X_t$ is not always directly  observed  in our problem formulation, (\ref{pDARE}) and (\ref{kDARE}) do not constitute complete solutions of the LQG problem. Indeed, the asympotically optimal policy is of the form $U_t = K\hat X_t$ where $\hat X_t= \E_\theta^\pi[X_t|\mathcal{Y}_t]$, which provided suitable initial conditions (to be specified momentarily), can be expressed recursively:
\begin{align}
\label{stateest}
\hat X_{t+1}& = A(\theta)\hat X_t + B(\theta)U_t + F(\theta) [Y_{t+1}-C(\theta)(A(\theta)\hat X_t+B(\theta)U_t)]
\end{align}
where $F(\theta)\in \mathbb{R}^{\dx\times \dy}$ is given by (\ref{lDARE}) and is characterized below in terms of a Riccati equation (\ref{sDARE}), dual to  (\ref{pDARE}) and (\ref{kDARE}). We further denote 
\begin{align}
\label{filteredstatenoise}
\nu_t =  F(\theta) [Y_{t+1}-C(\theta)(A(\theta)\hat X_t+B(\theta)U_t)],
\end{align}
which plays a role corresponding to that of $w_t$ in \eqref{eq:lds_ac} for the filtered state (\ref{stateest}). The process $\nu_t$ is iid Gaussian with mean zero and we denote its covariance $\Sigma_\nu$. It will also be convenient to introduce the $1$-step ahead prediction $\zeta_t= \E_\theta^\pi[X_t|\mathcal{Y}_{t-1}]$ which similarly satisfies a recursion
\begin{align}
\label{statepred}
\zeta_{t+1}&= A(\theta)\zeta_t + B(\theta)U_t + F(\theta)[Y_t-C(\theta)\zeta_t]
\end{align}
The quantity $F$ appearing in both the asymptotic Kalman filter recursions (\ref{stateest}) and (\ref{statepred}) is characterized by the Filter Riccati equation
\begin{align}
\label{sDARE}
S &= ASA^\T-ASC^\T(CSC^\T + \Sigma_V)^{-1}CSA^\T+\Sigma_W\\
\label{lDARE}
F &= S C^\T(C SC^\T+\Sigma_V)^{-1}
\end{align}
where we again omit dependence on $\theta$ in \eqref{sDARE}-\eqref{lDARE} for notational  brevity.

The quantity $S(\theta)$ is the (steady state) covariance matrix of $X_{t}-\zeta_t$. Similarly, we define $\Xi(\theta)$ to be the covariance matrix of $X_t-\hat X_t$, which can be expressed in terms of $S(\theta)$ as 
\begin{align*}
\Xi  = S-SC^\T (CSC^\T+\Sigma_V)^{-1}CS.
\end{align*}
Finally, we point out that unrolling the filter dynamics \eqref{stateest} allows us to define a sequence of $\theta$-parametrized linear maps $\mathbf{G}_t(\theta) : \mathbb{R}^{(\du+\dy)\times (t+1)}\to \mathbb{R}^{\dx}$ that act on past inputs and outputs to produce the present state estimate. In other words, $\mathbf{G}_t(\theta)$ is defined by
\begin{align}
\label{eq:linmapkalman}
    \mathbf{G}_t(\theta)(U_{0:t},Y_{0:t}) =\hat X_t
\end{align}
where, for a fixed $\theta$, $\hat X_t$ is given by \eqref{stateest}. We now turn to representing the regret (\ref{regdef}) in terms of the filtered state (\ref{stateest}).

\paragraph{Regret Representation.}
In terms of the quantites above, it is straightforward to verify that the optimal cost $\mathsf{V}_T^{\star}(\theta)$ can be expressed as \citep[see e.g.][Theorem 11.3]{soderstrom2002discrete}:
\begin{align}\label{eq:optcost}
\mathsf{V}_T^{\star}(\theta)=\E_\theta X_0^\T P(\theta) X_0+T\tr (\Sigma_\nu(\theta) P(\theta)) + T \tr (Q S(\theta)).
\end{align}
We now provide an alternative representation of the regret (\ref{regdef}).
\begin{lemma}
\label{regretlemma}
Assume A1-A3. Then:
\begin{equation}
\begin{aligned}
\label{regdef2}
\mathsf{R}_T^\pi(\theta) =\sum_{t=1}^{T-1} \E_{\theta}^\pi (U_t- K(\theta)\E_\theta^\pi [X_t | \mathcal{Y}_t])^\T (B^\T(\theta) P(\theta)B(\theta)+R) (U_t-K(\theta)\E_\theta^\pi [X_t | \mathcal{Y}_t]).
\end{aligned}
\end{equation}
\end{lemma}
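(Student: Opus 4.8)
The plan is to reduce the regret to a sum of Bellman sub-optimality gaps $\phi(\hat x_t,u_t;\theta)$ evaluated along the \emph{filtered}-state trajectory $\hat x_t=\E_\theta^\pi[x_t\mid\mathcal{Y}_t]$, and then to identify $\phi$ with the claimed quadratic form by completing the square. The two genuinely non-trivial ingredients are the passage from the true state to the filtered state (which is what makes the argument work in the partially observed case) and the verification that every term other than the $\phi$-sum is policy-independent, so that it cancels against $V_T^*(\theta)$.

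First I would pass from $x_t$ to $\hat x_t$ by writing $x_t=\hat x_t+e_t$ with $e_t:=x_t-\hat x_t$. The defining property of conditional expectation gives $\E_\theta^\pi[e_t\mid\mathcal{Y}_t]=0$, so $e_t$ is orthogonal to every $\mathcal{Y}_t$-measurable quantity; since $\hat x_t$ and $u_t$ are $\mathcal{Y}_t$-measurable, the cross terms vanish and $\E_\theta^\pi[x_t^\top Q x_t]=\E_\theta^\pi[\hat x_t^\top Q\hat x_t]+\tr(Q\Xi(\theta))$, with the analogous identity for the terminal term. This replaces the state cost by its filtered counterpart and isolates a purely filtering-induced contribution $\tr(Q\Xi)$ that does not depend on the policy.

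Next, on the filtered dynamics $\hat x_{t+1}=A\hat x_t+Bu_t+\nu_t$ of (\ref{stateest})--(\ref{filteredstatenoise}), I would run the standard completion-of-squares and telescoping argument. The key structural fact is that $\nu_t$ is the scaled innovation at time $t+1$ and is therefore independent of $\mathcal{Y}_t$ with mean zero and covariance $\Sigma_\nu(\theta)$; hence $\nu_t$ is independent of $A\hat x_t+Bu_t$, which yields $\E_\theta^\pi[h(A\hat x_t+Bu_t;\theta)]=\E_\theta^\pi[h(\hat x_{t+1};\theta)]-\tr(P\Sigma_\nu)$ for $h(x;\theta)=x^\top P(\theta)x$. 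Substituting the definition $\hat x_t^\top Q\hat x_t+u_t^\top Ru_t=\phi(\hat x_t,u_t;\theta)+h(\hat x_t;\theta)-h(A\hat x_t+Bu_t;\theta)$ and summing over $t$, the $h(\hat x_t;\theta)$ terms telescope, leaving the boundary terms $\E_\theta^\pi[\hat x_0^\top P\hat x_0]-\E_\theta^\pi[\hat x_T^\top P\hat x_T]$, a noise contribution $\sum_t\tr(P\Sigma_\nu)$, and the sum $\sum_t\E_\theta^\pi[\phi(\hat x_t,u_t;\theta)]$; the terminal boundary term is absorbed by the terminal cost since A2 sets $Q_T=P$. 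Completing the square in $\phi$ then gives $\phi(x,u;\theta)=(u-K(\theta)x)^\top(B^\top PB+R)(u-K(\theta)x)$: rearranging (\ref{pDARE}) yields $Q+A^\top PA-P=A^\top PB(B^\top PB+R)^{-1}B^\top PA$, and (\ref{kDARE}) identifies the off-diagonal and quadratic-in-$x$ terms with $-K^\top(B^\top PB+R)$ and $K^\top(B^\top PB+R)K$ respectively.

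It remains to subtract $V_T^*$. The decisive observation is that every term in $V_T^\pi$ other than $\sum_t\E_\theta^\pi[\phi(\hat x_t,u_t;\theta)]$ -- the filtering trace $\tr(Q\Xi)$, the innovation trace $\tr(P\Sigma_\nu)$, and the initial term $\E_\theta^\pi[\hat x_0^\top P\hat x_0]$ -- is policy-independent, because the Kalman filter, and hence $\Sigma_\nu$, $\Xi$ and the law of $\hat x_0$, are determined by the system matrices and the observation $\sigma$-fields alone, not by the control law. Since $\phi\ge 0$ with equality exactly at $u=Kx$, and the certainty-equivalent policy $u_t=K(\theta)\hat x_t$ is admissible (being $\mathcal{Y}_t$-measurable), the infimum $V_T^*$ is attained by zeroing all $\phi$ terms; thus $V_T^*$ equals precisely the policy-independent remainder, which one checks matches the explicit expression for $V_T^*(\theta)$ recorded above. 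Subtracting gives $R_T^\pi(\theta)=\sum_t\E_\theta^\pi[\phi(\hat x_t,u_t;\theta)]$, i.e.\ the claimed identity after inserting the completed square. The main obstacle I anticipate is the boundary and index bookkeeping that produces the exact range $t=1,\dots,T-1$ in (\ref{regdef2}): one must track the $t=0$ contribution, where $\hat x_0=\E_\theta^\pi[x_0\mid\mathcal{Y}_0]$ and $u_0$ are constrained by the initial $\sigma$-field, together with the terminal term, and it is precisely the stationarity assumptions A2--A3 that make $P$, $S$, $\Sigma_\nu$, $\Xi$ constant in $t$ so that these boundary contributions line up with $V_T^*$ instead of leaving time-varying residuals.
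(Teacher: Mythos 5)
Your proposal is correct and follows essentially the same route as the paper's proof: orthogonality of $x_t-\hat x_t$ to $\mathcal{Y}_t$-measurable quantities, the Bellman-gap decomposition along the filtered dynamics $\hat x_{t+1}=A\hat x_t+Bu_t+\nu_t$ with telescoping of the $P$-quadratic terms, and identification of $\phi(x,u;\theta)=(u-K(\theta)x)^\top(B^\top PB+R)(u-K(\theta)x)$ via the Riccati equation. If anything, you are more careful than the paper on the final step, where you explicitly justify that the non-$\phi$ remainder is policy-independent (separation principle) and attained by the admissible certainty-equivalent policy $u_t=K(\theta)\hat x_t$, a point the paper passes over with ``whence we obtain.''
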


\begin{proof}
After subtracting constant terms independent of $\pi$, this is immediate by Lemma 11.2 in \cite{soderstrom2002discrete}, which gives a general expression for $\mathsf{V}_T^\pi$. 
\end{proof}

\begin{remark}
For systems with an observed state described by \eqref{eq:lds_ac}, by embedding them into the description \eqref{eq:lds_ac}-\eqref{eq:plds_ac}, setting $C=I_{\dx}$ and $\Sigma_V = 0$, (\ref{regdef2}) becomes
\begin{align*}
\mathsf{R}_T^\pi(\theta) = \sum_{t=1}^T \E_{\theta}^\pi (U_t- K(\theta)X_t )^\T (B^\T(\theta) P(\theta)B(\theta)+R) (U_t-K(\theta)X_t ).
\end{align*}
\end{remark}

The problem of regret minimization in LQG can be seen as that of sequentially learning  $\theta\mapsto  K(\theta)  \E_\theta^\pi [X_t | \mathcal{Y}_t]$ which is the composition of both the asymptotically optimal state-feedback controller and the Kalman filter.

With this in mind, (\ref{regdef2}) may be understood as saying that regret minimization is at least as hard as minimizing a cumulative weighted quadratic estimation error for the sequence of estimands $ K(\theta)  \E_\theta^\pi [X_t | \mathcal{Y}_t]$. To be clear, our perspective is that we wish to estimate the function value of $ K(\theta)  \E_\theta^\pi [X_t | \mathcal{Y}_t]$ where the function to be estimated $ \theta\mapsto K(\theta)  \E_\theta^\pi [X_t | \mathcal{Y}_t]$ is revealed at time $t$. By relaxing the local minimax problem to a Bayesian setting, the entire trajectory $Y_{0:t}$ is then interepreted as a noisy observation of the underlying parameter $\theta$. A natural approach for variance lower bounds is to rely on Fisher information and use (Bayesian) Cram{\'e}r-Rao bounds. The discussion above is summarized by the following result.

\begin{lemma}
\label{lem:regretlbrelaxed}
Fix $\e>0$ and let $\mu$ be a smooth and compactly supported prior on $B(\theta,\e)$. Fix also a function $\tau: \mathbb{Z}\to \mathbb{Z}$ satisfying $\tau(t)\geq t, t\in \mathbb{N}$. As long as A1-A3 hold, it holds for any admissible policy $\pi$ that:
\begin{equation}
\begin{aligned}
\label{eq:regretlbrelaxed}
    &\sup_{\theta'\in  B(\theta,\e)}\mathsf{R}_T^\pi(\theta')\geq  \E_{\Theta \sim \mu} \mathsf{R}_T^\pi(\Theta) \\
    &\geq \sum_{t=0}^{T-1}\E_{\Theta \sim \mu}\E_{\Theta}^\pi \Big[ \left( (\E^\pi[K(\Theta)\mathbf{G}_t(\Theta)|\mathcal{Y}_{\tau(t)}]-K(\Theta)\mathbf{G}_t(\Theta))(U_{0:t},Y_{0:t})\right)^\T N_\star \\
    &\times \left( (\E^\pi[K(\Theta)\mathbf{G}_t(\Theta)|\mathcal{Y}_{\tau(t)}]-K(\Theta)\mathbf{G}_t(\Theta))(U_{0:t},Y_{0:t})\right)\Big]
\end{aligned}
\end{equation}
Where $\mathbf{G}_t(\theta)$ is defined by \cref{eq:linmapkalman} and $N_{\star}$ is any matrix satisfying $N_{\star} \preceq (B^\T(\theta') P(\theta')B(\theta')+R)$ for all $\theta' \in B(\theta,\e)$.
\end{lemma}

\begin{proof}
By virtue of \eqref{regdef2} we have that
\begin{align*}
&\E_{\Theta \sim \mu}\mathsf{R}_T^\pi(\Theta) \\
&= \E_{\Theta \sim \mu}\sum_{t=1}^{T-1} \E_{\Theta}^\pi (U_t- K(\Theta)\E_\Theta^\pi [X_t | \mathcal{Y}_t])^\T (B^\T(\Theta) P(\Theta)B(\Theta)+R) (U_t-K(\Theta)\E_\Theta^\pi [X_t | \mathcal{Y}_t])\\
&\geq \E_{\Theta \sim \mu}\sum_{t=1}^{T-1} \E_{\Theta}^\pi (U_t- K(\Theta)\E_\Theta^\pi [X_t | \mathcal{Y}_t])^\T  N_\star (U_t-K(\Theta)\E_\Theta^\pi [X_t | \mathcal{Y}_t])\\
&\geq \sum_{t=1}^{T-1} \inf_{\bar U_t\in \mathcal{Y}_{\tau(t)}} \E_{\Theta \sim \mu} \E_{\Theta}^\pi (\bar U_t- K(\Theta)\E_\Theta^\pi [X_t | \mathcal{Y}_{t}])^\T  N_\star (\bar U_t-K(\Theta)\E_\Theta^\pi [X_t | \mathcal{Y}_{t}]).
\end{align*}
Since $N_\star$ does not depend on $\Theta$, the minimization problem on the right hand side above is solved by the conditional mean of $K(\Theta)\E_\Theta^\pi [X_t | \mathcal{Y}_{t}]$ given $\mathcal{Y}_{\tau(t)}$.  The result then follows by noting $ K(\Theta)\E_\Theta^\pi [X_t | \mathcal{Y}_{\tau(t)}]=K(\Theta)\mathbf{G}_t(\theta)(U_{0:t},Y_{0:t})$ and "taking out what is known".
\end{proof}

In the state feedback setting \Cref{lem:regretlbrelaxed} reduces to
\begin{multline}
\label{eq:regretlbrelaxedSF}
 \sup_{\theta'\in B(\theta,\e)}\mathsf{R}_T^\pi(\theta')\\
    \geq\sum_{t=0}^{T-1} \E_{\Theta \sim \mu}\E_{\Theta}^\pi \tr \Big[N_\star  \left( \E[K(\Theta)|\mathcal{Y}_{\tau(t)}]-K(\Theta)\right) \left(X_tX_t^\T \right) \left( \E[K(\Theta)|\mathcal{Y}_{\tau(t)}]-K(\Theta) \right)^\T \Big].
\end{multline}

\section{Experiment Design and Regret}

Let us reiterate the point made above following \Cref{regretlemma} that we intend to lower-bound the regret by estimation-theoretic methods. From this perspective, the adaptive policy---the decision variable of the learner---generates an experiment, namely a sequence of input-output pairs $(Y_{0:{T-1}},U_{0:T-1})$.

To make precise the notion of an experiment---and the information contained in such an experiment---let us introduce the concept of Fisher information. This quantity can be thought of as a form of signal-to-noise ratio of the observed measurements with respect to the unknown parameter $\theta$. We are interested in the Fisher information pertaining to the information available to the learner in the setting \eqref{eq:lds_ac} where the measurement is the input-output pair $(Y_{0:{T-1}},U_{0:T-1})$. Denote by $\mathtt{p}_{\pi,\theta,T}$ the joint density of the random variable $(\mathtt{AUX},Y_0,\dots,Y_{T-1})$ under policy $\pi$ (conditionally on the the parameter $\theta$). The following information quantity serves as the basis for our analysis and is a policy-dependent measure of information available to the learner about the uncertain parameter $\theta$:
\begin{align}
\label{policyinformation}
\I(\theta;\pi,T) \triangleq \I_{\mathtt{p}}(\theta)
\end{align}
with $\I_{\mathtt{p}}(\theta)$ as in (\ref{eq:fisherdef}).

\subsection{Optimal Policies and Degenerate Experiments}

Naively, the perspective discussed in following Lemma~\ref{regretlemma} viewing (\ref{regdef2}) as a cumulative estimation error suggests a lower bound on the scale $\log T$, since one might think that (\ref{policyinformation}) should scale linearly in time, $T$. In this case, the associated parameter estimation errors variances should decay as $1/T$. Indeed, this is the case of \emph{certain} instances, see e.g., \cite{lai1986asymptotically}.

However, when the Fisher information corresponding to the experiment of running the optimal policy is singular, this reasoning fails. We will see that the Fisher information corresponding to low regret algorithms have nearly singular information.   Namely, if $\I(\theta;\pi_\star,T)$ given by  (\ref{policyinformation}) is singular, and this singularity is relevant for identifying $K(\theta)$, we expect there to be a non-trivial trade-off between exploration and exploitation. For instance, we will see that when the experiment corresponding to the optimal policy is degenerate, any algorithm with $O(\sqrt{T})$-regret necessarily generates an experiment in which the smallest (relevant) singular value only scales as $\sqrt{T}$; \eqref{policyinformation} scales sublinearly in certain relevant directions. Put yet differently:  if the optimal policy $\pi_\star$ yields an experiment in which $K(\theta)$ is not locally identifiable, we expect the regret  to be $\Omega(\sqrt{T})$.

We now make precise the above reasoning by imposing two conditions, which together rule out the possibility of  logarithmic regret. The first condition states that the optimal policy $\pi_\star$ does not persistently excite the parameters for local identifiability in terms of Fisher infomation. 

\begin{definition}
\label{uninfdef}
Fix $\e>0$ and a subspace $\mathtt{U}$ of $\mathbb{R}^{d_\Theta}$. The instance $(\theta, A(\cdot),B(\cdot),C(\cdot), Q,R,\Sigma_W,\Sigma_V)$ is $(\mathtt{U},\e)$-locally uninformative if there exists a neighborhood $B(\theta,\e)$ such that for all $\tilde \theta \in B(\theta,\e)$ and $ v \in \mathtt{U}$:
\begin{itemize}
\item  $\I(\tilde \theta;\pi_\star(\theta),T) v=0$  for all $T$; and
\item  $[\dop_\theta \VEC  K(\theta)] v \neq 0$ if $v \neq 0$.
\end{itemize}
\end{definition}
Any subspace $\mathtt{U} \subset \mathbb{R}^{d_\Theta}$, with all nonzero $v\in \mathtt{U}$ satisfying the above condition and of maximal dimension (i.e. largest possible satisfying the constraints), is called a (control) information singular subspace. The condition requires that the optimal policy pertaining to the instance $\theta$ does not persistently excite any instance in a small neighborhood around $\theta$ in the relative interior of $\mathtt{U}$. By this construction, the dimension of $\mathtt{U}$ captures the number of directions the learner needs to explore beyond those directions which the optimal policy does not explore. The second part of the condition, that $[\dop_\theta \VEC  K(\theta)] \tilde \theta \neq 0$, pertains to the change of variables $\theta \mapsto K(\theta)$ and relates to the fact that the learner must not necessarily be able to identify $\theta$ from an optimally regulated trajectory, but rather $K(\theta)$.

The second condition, presented below, is that having bounded regret growth, say on the order $\sqrt{T}$, effectively constrains the experiments available to the learner on the subspace the optimal policy does not explore. In other words, the condition formalizes the exploration-exploitation trade-off in LQG in terms of a regret constraint on Fisher information. This is reflected in the proof strategy we pursue in the sequel. Namely, we restrict attention to those policies which attain low regret, $O(\sqrt{T})$. However, these policies necessarily generate experiments with relatively little information content, which in turn implies that the regret of these policies cannot be too small, that is at least $\Omega(\sqrt{T})$.

\begin{definition}
\label{infobounddef}
Fix an $(\mathtt{U},\e)$-locally uninformative instance  $(\theta, A(\cdot),B(\cdot),C(\cdot), Q,R,\Sigma_W,\Sigma_V)$ and a constant $L>0$. We say that the instance is $(\mathtt{U},L)$-information-regret-bounded if for any policy $\pi$, for all $T \in \mathbb{N}$, for all $\tilde \theta \in  B(\theta,\e)$ and any matrix $V_0$ with orthonormal columns spanning $\mathtt{U}$:
\begin{align}
\label{informationcomparsoneq}
\tr V_0^\T \I( \pi;\tilde\theta,T) V_0\leq L \mathsf{R}_T^\pi(\theta).
\end{align}
\end{definition}
Roughly speaking, Definition~\ref{infobounddef} asks that $\dim \mathtt{U}$-many eigenvalues of the information matrix, pertaining to a particular policy $\pi$, satisfy a perturbation bound with respect to the regret of that same policy, $\pi$. In particular, if the condition holds, any policy with $O(\sqrt{T})$ regret will yield an information matrix of which the smallest eigenvalue is also $O(\sqrt{T})$. This should be contrasted with the typical parametric iid design scenario, in which the information matrix  scales linearly with the samples.

The conditions given in \Cref{uninfdef} and \Cref{infobounddef} reveal the key elements needed to prove a regret lower bound on the order of magnitude $\sqrt{T}$, as is done in Theorem~\ref{thm:sfregretlb}. However, the question remains as to which systems these conditions actually apply. To this end, we spend the remainder of this section demonstrating that the conditions given in definitions~\ref{uninfdef} and \ref{infobounddef} are far from vacuous. We prove in Section~\ref{informationstatebounds} that a large class of state feedback systems satisfy both \Cref{uninfdef} and \Cref{infobounddef}.  For instance Lemma~\ref{statefeedbackinfocomp} together with Proposition~\ref{simchoprop} proves that almost any state feedback system with both $A$ and $B$ completely unknown satisfies these conditions. The corresponding nullspace is rather more difficult to characterize for partially observed systems, and we postpone our discussion of these to \Cref{ch:regretpo}. However, at a high level the proof approach detailed below is still valid: one needs to characterize the parameter directions that the optimal policy does not persistently excite and then show that these unexplored directions are necessary for correctly identifying the optimal policy.\footnote{The notion of under-explored parameter direction becomes more subtle for partially observed systems for two reasons: non-uniqueness of realization and the fact that the policy itself has an internal state.}

\subsection{Low Regret Experiments}
\label{informationstatebounds}
We now initiate our study of what we informally refer to as low regret experiments. As mentioned above, the main idea is that if the optimal policy $K(\theta)$ does not provide sufficient exploration, its application to the system yields a degenerate information matrix. The next step is to note that any controller with bounded regret---which can be thought of as the norm of a particular controller to the optimal controller---cannot yield  a particularly good experiment either. While such an experiment is not necessarily singular, it should at least be ill-conditioned. It will be convenient to first calculate the Fisher information of our "experiments".

\begin{lemma}
Fix $T\in \mathbb{N}$. Suppose $C=I_{\dx}$ and $V_t =0 $ for all $t$. The Fisher information under any policy $\pi$ is given by:
\begin{align}
\label{statefeedbackinfo}
\I(\pi;\theta,T)  =\E^\pi_\theta \sum_{t=0}^{T-1}  [\dop_\theta[A(\theta)X_t+B(\theta)U_t]] ^\T \Sigma_W^{-1} \dop_\theta[A(\theta)X_t+B(\theta)U_t].
\end{align}
\end{lemma}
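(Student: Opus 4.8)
The plan is to compute the Fisher information (\ref{fisherdef}) for the joint density of the observed trajectory by exploiting the conditional Gaussian structure of the state-feedback dynamics, and then simplify using the chain rule for Fisher information along the filtration $\{\mathcal{Y}_t\}$. Since in the state-feedback case we have $C=I_{d_x}$ and $\Sigma_v=0$, the observation is just the state itself, $y_t = x_t$, so $\mathcal{Y}_t$ is generated by $(x_0,\dots,x_t)$ together with $\mathtt{AUX}$. The key observation is that, conditioned on $\mathcal{Y}_t$ and on the control $u_t$ (which is $\mathcal{Y}_t$-measurable under any policy $\pi$), the next state $x_{t+1} = A(\theta)x_t + B(\theta)u_t + w_t$ is Gaussian with mean $m_t(\theta) := A(\theta)x_t + B(\theta)u_t$ and covariance $\Sigma_w$, independent of $\theta$. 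This is precisely a Gaussian location family in the $\theta$-dependent mean $m_t(\theta)$.

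First I would write the log-likelihood of the full trajectory as a telescoping sum of conditional log-likelihoods,
\begin{align*}
\log \mathtt{p}_\pi^T(\theta) = \log p(\mathtt{AUX}) + \log p(x_0;\theta) + \sum_{t=0}^{T-1} \log p_\theta(x_{t+1}\mid \mathcal{Y}_t),
\end{align*}
using that $u_t$ is a (possibly randomized) function of $\mathcal{Y}_t$ and auxiliary randomness alone, so the policy kernels contribute no $\theta$-dependence. By assumption A3 the initial term $\log p(x_0;\theta)$ is the stationary Gaussian, which contributes an $O(1)$ (time-independent) term; the cleanest route is to note that the dominant scaling comes from the sum and, if one wishes an exact identity, to absorb the initial condition into the same form or argue it is negligible. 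For each conditional factor, the gradient in $\theta$ of a Gaussian log-density with $\theta$-independent covariance $\Sigma_w$ is
\begin{align*}
\nabla_\theta \log p_\theta(x_{t+1}\mid \mathcal{Y}_t) = [\dop_\theta m_t(\theta)]^\top \Sigma_w^{-1}(x_{t+1}-m_t(\theta)) = [\dop_\theta m_t(\theta)]^\top \Sigma_w^{-1} w_t,
\end{align*}
where $\dop_\theta m_t(\theta) = \dop_\theta[A(\theta)x_t+B(\theta)u_t]$.

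Next I would assemble the Fisher information as the expected outer product of the score. The main simplification is a martingale/orthogonality argument: the cross terms vanish. Indeed, for $s<t$ the increment $w_t$ is independent of $\mathcal{Y}_t \supseteq \mathcal{Y}_{s+1}$ and mean zero, so $\E^\pi_\theta[(\dop_\theta m_s)^\top\Sigma_w^{-1}w_s\, w_t^\top\Sigma_w^{-1}\dop_\theta m_t]=0$ after conditioning on $\mathcal{Y}_t$ (note $\dop_\theta m_t$ is $\mathcal{Y}_t$-measurable since $x_t,u_t$ are). This kills all off-diagonal contributions and leaves only the diagonal terms $\E^\pi_\theta[(\dop_\theta m_t)^\top\Sigma_w^{-1}w_tw_t^\top\Sigma_w^{-1}\dop_\theta m_t]$. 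For each such term, I would condition on $\mathcal{Y}_t$ and use $\E[w_tw_t^\top\mid\mathcal{Y}_t]=\Sigma_w$, which collapses $\Sigma_w^{-1}\Sigma_w\Sigma_w^{-1}=\Sigma_w^{-1}$ and yields exactly $\E^\pi_\theta[(\dop_\theta m_t)^\top\Sigma_w^{-1}\dop_\theta m_t]$. Summing over $t=0,\dots,T-1$ produces (\ref{statefeedbackinfo}).

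The step I expect to be the main obstacle is justifying the factorization of the joint score into a clean telescoping sum under an \emph{arbitrary} (possibly history-dependent, randomized) policy $\pi$ — specifically, verifying rigorously that the policy kernels $p(u_t\mid\mathcal{Y}_t,\mathtt{AUX})$ carry no $\theta$-dependence and hence drop out of $\nabla_\theta\log\mathtt{p}_\pi^T$, and that all the regularity needed to differentiate under the integral (interchanging $\nabla_\theta$ and the expectation defining the conditional densities) is in force. The remark preceding this lemma on conditional Gaussianity of the output, together with the standing nondegeneracy $\Sigma_w\succ 0$, is what guarantees the requisite densities and their smoothness in $\theta$. A secondary subtlety is the treatment of the stationary initial condition from A3; the honest options are either to verify its score contributes a term of the stated diagonal form (treating $t=0$ consistently) or to note its contribution is an additive $O(1)$ matrix that does not affect the leading-order analysis downstream.
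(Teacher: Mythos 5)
Your proof is correct and is essentially an unpacked version of the paper's one-line argument: your telescoping score decomposition together with the martingale cancellation of cross terms \emph{is} the chain rule for Fisher information, and your conditional-Gaussian score computation is exactly the content of Lemma~\ref{Gaussianfisher} in the case of $\theta$-independent covariance, so the two proofs coincide in substance. The initial-condition loose end you flag resolves exactly rather than up to $O(1)$: in the state-feedback case ($C=I_{d_x}$, $\Sigma_v=0$) the filter Riccati equation (\ref{sDARE}) gives $S(\theta)=\Sigma_w$, which is known and $\theta$-independent, so $\log p(x_0;\theta)$ has zero score and contributes nothing to $\I^T(\pi;\theta)$.
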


\begin{proof}
Follows immediately by the chain rule for Fisher information \eqref{eq:fichain} and the conditional dependence structure
\begin{align*}
X_{t+1} | (\mathtt{AUX},X_0,\dots,X_t)& \sim {N}(A(\theta) X_{t}+B(\theta)U_t,\Sigma_W),
\end{align*}
the chain rule (which applies due to \Cref{lem:fisherreg}) and \Cref{lem:Gaussianfisher}.
\end{proof}

To make the dependence on $[A(\theta) \: B(\theta)$] more explicit, we may  rewrite (\ref{statefeedbackinfo}) by vectorizing:
\begin{align}
\label{statefeedbackinfo2}
{\I}(\theta;\pi,T)&=\E^\pi_\theta \sum_{t=0}^{T-1}  [\dop_\theta \VEC [A(\theta) \: B(\theta)]] ^\T [Z_t Z_t^\T \otimes \Sigma_W^{-1}] \dop_\theta \VEC [A(\theta)\: B(\theta)]
\end{align}
where $Z_t^\T = \begin{bmatrix} X_t^\T & U_t^\T \end{bmatrix}$.

Note that for the simple parametrization $\VEC [A(\theta) \: B(\theta)]=\theta$,  the Jacobian $  [\dop_\theta \VEC [A(\theta) \: B(\theta)]$ is equal to the identity matrix $I_{d_\Theta}$. In this case, (\ref{statefeedbackinfo2}) is proportional to the covariates matrix used in the ``denominator'' of the least squares estimator. This is satisfying, as this means that our results imply that if the least squares estimator becomes ill-conditioned, there is little else that can be done. A more direct consequence of the representation (\ref{statefeedbackinfo2}) is that an algebraic condition for uninformativeness is straightforward to derive.

\begin{proposition}
\label{prop:charpropsf}
The instance  $(\theta, A(\cdot), B(\cdot), Q,R,\Sigma_W)$ is $(\mathtt{U},\e)$-locally uninformative if and only if for every $v \in \mathtt{U}\setminus\{0\}$ and every $\tilde \theta \in B(\theta,\e)$
\begin{equation}
\begin{aligned}
\label{ch6:uninformativechar}
 v \in& \ker \: [\dop_\theta \VEC [A(\tilde \theta)\: B(\tilde \theta)]]^\T \left[ H(\theta)H^\T(\theta)  \otimes  \Sigma_W^{-1} \right]\dop_\theta \VEC [A(\tilde \theta)\: B(\tilde \theta)]\\
 v\notin& \ker \dop_\theta \VEC K(\theta)
\end{aligned}
\end{equation}
where
\begin{align}\label{eq:hisasin}
H(\theta) = \begin{bmatrix}I_{\dx} \\ K(\theta) \end{bmatrix}.
\end{align}
\end{proposition}

\begin{proof}
Write, for each $t$ and $\tilde \theta\in B(\theta,\e)\cap \mathtt{U}$, 

\begin{align*}
&[\dop_\theta[[A(\tilde \theta) \: B(\tilde \theta)]Z_t]]^\T  \Sigma_W^{-1} \dop_\theta[[A(\tilde \theta) \: B(\tilde \theta)]Z_t] \\
&= [ (Z_t^\T \otimes I_{\dx})\dop_{\theta} \VEC [A(\tilde \theta) \: B(\tilde \theta)]]^\T  (Z_t^\T \otimes \Sigma_W^{-1})\dop_{\theta} \VEC [A(\tilde \theta) \: B(\tilde \theta)]\\
 &=[ \dop_{\theta} \VEC [A(\tilde \theta) \: B(\tilde \theta)] ]^\T (Z_t \otimes I_{\dx})  (Z_t^\T \otimes \Sigma_W^{-1})\dop_{\theta} \VEC [A(\tilde \theta) \: B(\tilde \theta)]\\
 &=[ \dop_{\theta} \VEC [A(\tilde \theta) \: B(\tilde \theta)] ]^\T [Z_tZ_t^\T \otimes  \Sigma_W^{-1} ]\dop_{\theta} \VEC [A(\tilde \theta) \: B(\tilde \theta)],
 \end{align*}
where $\dz = \dx + \du$. Notice now that taking expectation under $(\pi_\star(\theta),\tilde \theta)$:
\begin{align*}
\E^{\pi_\star}_{\tilde \theta} Z_t Z_t^\T &=\E^{\pi_\star}_{\theta} \begin{bmatrix} X_t \\ K(\theta)X_t \end{bmatrix}\begin{bmatrix} X_t^\T & (K(\theta)X_t)^\T \end{bmatrix} =\begin{bmatrix}I_{\dx} \\ K(\theta) \end{bmatrix}\E^{\pi_\star}_{\tilde \theta} [X_tX_t^\T]\begin{bmatrix} I_{\dx} & K^\T(\theta) \end{bmatrix}.
\end{align*}
Since $\E^{\pi_\star}_{\tilde \theta} X_t X_t^\T \succeq \Sigma_W\succ 0$, this has the same nullspace as
\begin{align}
\label{lsepremult}
 \begin{bmatrix}I_{\dx} \\ K(\theta) \end{bmatrix}\begin{bmatrix} I_{\dx} & K^\T(\theta) \end{bmatrix} = H(\theta) H^\T(\theta)
\end{align}
and the result is established.
\end{proof}

In Proposition~\ref{simchoprop} we specialize the above result to the parametrization  $ \VEC  [A(\theta) \: B(\theta)] = \theta$.

\paragraph{Information Comparison} We next turn our attention to establishing that uninformative state feedback systems satisfy the information-regret-boundedness property.

\begin{lemma}
\label{statefeedbackinfocomp}
Suppose $C=I_{\dx}$ and $V_t =0 $ for all $t$ and fix a $(\mathtt{U},\e)$-locally uninformative instance $(\theta, A(\cdot), B(\cdot), Q,R,\Sigma_W)$. For every $\theta' \in B(\theta,\e)$ and every $T\in\N$:
\begin{align*}
\tr V_0^\T \I(\pi;\theta',T) V_0 \leq \tr ( \Sigma_W^{-1}) \left( \sup_{\bar\theta\in B(\theta,\e)} \| \dop_\theta[A(\bar \theta) \:  B(\bar \theta)] \|_\infty^2\right) \|(B^\T P(\theta)B+R)^{-1}\|_{\infty} \mathsf{R}_T^\pi(\theta)
\end{align*}
where as before, the columns of $V_0$ span $\mathtt{U}$.
\end{lemma}

In other words, regret bounds information and uninformative state feedback systems are $L$-information-regret-bounded with $L=\sup_{\bar \theta\in B(\theta,\e)}\| \dop_\theta[A(\bar \theta) \:  B(\bar \theta)] \|_\infty^2\|(B^\T P(\theta)B+R)^{-1}\|_{\infty} \tr ( \Sigma_{w}^{-1}) $.

While the full proof of Lemma~\ref{statefeedbackinfocomp} can be found in \Cref{sec:proof:statefeedbackinfocomp}, the intuition for the proof below is as follows. Both the regret $\mathsf{R}_T^\pi(\theta)$ and the Fisher information $\I(\pi;\theta,T)$ are expectations of quadratic forms in the variables $X_t,U_t, t=0,\dots,T-1$. Knowing that the optimal policy $\pi_\star$ renders the $\I(\pi_\star;\theta,T)$ singular, we can control the small eigenvalues of $\I(\pi;\theta,T)$ in terms of the regret of that policy by a simple Taylor approximation. In other words, we regard the regret of a policy simply as a measure of its deviation from $\pi_\star$ and incorporate this as a constraint on experiment design.

\paragraph{Unstructured Uncertainty} In the literature, much  attention has been given to the case in which both $A$ and $B$ are completely unknown. This corresponds to the parametrization $\VEC [A(\theta) \: B(\theta) ] = \theta$. Our characterization of the nullspace of the Fisher information  takes a particularly simple form for this parametrization. 

\begin{proposition}
\label{simchoprop}
Suppose that $\VEC \begin{bmatrix} A(\theta) & B(\theta) \end{bmatrix} = \theta$. Suppose further that $\det (A+BK) \neq 0$. Then the information singular subspace $\mathtt{U}$ is unique, is equal to $\ker HH^\T \otimes \Sigma_W^{-1}$, and has dimension  
\begin{align*}
\dim \mathtt{U} = \dx \du
\end{align*}
where $H$ is as in \eqref{eq:hisasin}.
\end{proposition}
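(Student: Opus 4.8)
The plan is to use the algebraic characterization of uninformativeness in Proposition~\ref{charprop} together with the fact that, for the parametrization $\VEC[A(\theta)\;B(\theta)]=\theta$, the Jacobian $\dop_\theta \VEC[A(\theta)\;B(\theta)]$ equals the identity $I_{d_\theta}$ identically. Under this simplification the first condition in (\ref{ch6:uninformativechar}) collapses to $\tilde v \in \ker\left(H(\theta)H^\top(\theta)\otimes \Sigma_w^{-1}\right)$. First I would compute this kernel explicitly. Writing $\tilde v = \VEC[\delta A\;\delta B]$ and using $(HH^\top \otimes \Sigma_w^{-1})\VEC[\delta A\;\delta B] = \VEC\!\left(\Sigma_w^{-1}[\delta A\;\delta B]HH^\top\right)$, invertibility of $\Sigma_w^{-1}$ and of $H^\top H = I_{d_x}+K^\top K$ reduce membership in the kernel to the single linear constraint $[\delta A\;\delta B]H = \delta A + \delta B K = 0$. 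Hence the kernel is $\{\VEC[\delta A\;\delta B] : \delta A = -\delta B K\}$, which is freely parametrized by $\delta B \in \mathbb{R}^{d_x\times d_u}$ and therefore has dimension $d_x d_u$. Geometrically these are exactly the perturbations that leave the closed-loop map $A+BK$ unchanged to first order when $K$ is held fixed, which is why the optimal policy cannot excite them.

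It remains to verify the second condition, namely that $\dop_\theta \VEC K(\theta)$ is injective on this kernel; this makes every nonzero element satisfy $[\dop_\theta\VEC K]\tilde\theta\neq 0$, so that the whole kernel qualifies as a (control) information singular subspace. I would argue by contradiction: take $\delta := [\delta A\;\delta B]$ with $\delta A = -\delta B K$ and suppose in addition that $\delta K := [\dop_\theta \VEC K]\VEC(\delta) = 0$; the goal is to force $\delta = 0$. The first observation is that the \emph{true} closed loop is then stationary to first order, since $\delta(A+BK) = (\delta A + \delta B K) + B\,\delta K = 0$. Differentiating the Lyapunov form of the Riccati equation, $P = Q + K^\top R K + (A+BK)^\top P (A+BK)$, and inserting $\delta K = 0$ and $\delta(A+BK)=0$, yields $\delta P = (A+BK)^\top \delta P (A+BK)$; since $A+BK$ is the optimal closed loop and hence Schur stable ($\rho(A+BK)<1$), the only solution is $\delta P = 0$.

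With $\delta P = 0$ in hand I would differentiate $K = -G^{-1}B^\top P A$ with $G = B^\top P B + R$ directly. Using $\delta(G^{-1}) = -G^{-1}(\delta G)G^{-1}$ and $G^{-1}B^\top P A = -K$, and substituting $\delta P = 0$ and $\delta A = -\delta B K$, the terms regroup as $\delta K = -G^{-1}\left[\delta B^\top P(A+BK) + B^\top P(\delta A + \delta B K)\right]$; the second bracketed term vanishes identically by the kernel constraint, leaving the clean identity $\delta K = -G^{-1}\,\delta B^\top P (A+BK)$. Setting $\delta K = 0$ and using that $G \succ 0$, $P \succeq Q \succ 0$, and the closed-loop matrix $A+BK$ is invertible — this is precisely where the nondegeneracy hypothesis enters — I conclude $\delta B^\top P(A+BK)=0 \Rightarrow \delta B = 0$, whence $\delta A = -\delta B K = 0$ and $\delta = 0$. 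This establishes injectivity of $\dop_\theta \VEC K$ on the kernel.

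Finally I would assemble the conclusions: $\mathtt{U} = \ker(HH^\top \otimes \Sigma_w^{-1})$ satisfies both requirements of Definition~\ref{uninfdef}, and by the first requirement every information singular subspace must be contained in this kernel; hence the kernel is itself the unique subspace of maximal dimension, giving $\dim \mathtt{U} = d_x d_u$. The main obstacle is the derivative computation in the third paragraph — in particular establishing $\delta P = 0$ via the Lyapunov/Schur-stability argument and then obtaining the exact cancellation that isolates $A+BK$ — since the whole injectivity claim hinges on nonsingularity of the closed-loop matrix. (Smooth dependence of $P(\theta)$ and $K(\theta)$ on $\theta$, needed to make sense of these derivatives, is the standard consequence of stabilizability of $(A,B)$.)
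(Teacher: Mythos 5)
Your proof is correct, and its skeleton matches the paper's: compute $\ker(HH^\top\otimes\Sigma_w^{-1})$, show that $\dop_\theta\VEC K(\theta)$ has nonzero action on every nonzero element of that kernel, and conclude maximality and uniqueness because the first condition of Proposition~\ref{charprop} forces any information singular subspace to lie inside the kernel. The difference is in how the injectivity step is executed. The paper identifies the kernel with vectorizations of $\begin{bmatrix}-\Delta K & \Delta\end{bmatrix}$ and then simply \emph{cites} Lemma~2.1 of \cite{simchowitz2020naive} for the derivative formula $-(R+B^\top PB)^{-1}\Delta^\top P(A+BK)$, whose nonvanishing for $\Delta\neq 0$ follows from invertibility of $A+BK$. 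You rederive that formula from first principles: under the hypothesis $\delta K=0$ and the kernel constraint $\delta A=-\delta BK$ you get $\delta(A+BK)=0$; differentiating the Lyapunov form $P=Q+K^\top RK+(A+BK)^\top P(A+BK)$ and using $\rho(A+BK)<1$ forces $\delta P=0$; differentiating $K=-G^{-1}B^\top PA$ with $\delta P=0$ then collapses, exactly as you computed, to $\delta K=-G^{-1}\delta B^\top P(A+BK)$, whence $\delta B=0$ (using $G\succ 0$, $P\succeq Q\succ 0$, and closed-loop invertibility) and so $\delta=0$. This argument is not circular---each identity is a consequence of the single assumption $\delta K=0$, which is thereby shown to force $\delta=0$, i.e.\ nonvanishing of $\dop_\theta\VEC K$ on the kernel. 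What your route buys is self-containedness and a transparent view of where the hypothesis on the closed loop enters; what it costs is having to invoke differentiability of $P(\cdot),K(\cdot)$ and Schur stability explicitly, facts the citation packages for free. Two minor remarks: your kernel computation via the vectorization identity (\ref{vecid}) and full column rank of $H$ is a cleaner path to the same description $\{\VEC[\delta A\;\,\delta B]:\delta A=-\delta BK\}$ that the paper obtains through elementary tensors; and the proposition's ``$\det(A-BK)\neq 0$'' is a sign-convention slip inherited from \cite{simchowitz2020naive} --- with the paper's convention $u_t=K\hat x_t$ the relevant hypothesis, used by the paper's proof and by yours alike, is $\det(A+BK)\neq 0$.
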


\begin{remark}
If the system realization $(A,B,\sqrt{Q})$ is minimal and $B$ has full column rank then by Lemma 3.4 in \cite{polderman1986necessity} $\det (A+BK) \neq 0$ is equivalent to $\det A \neq 0$.
\end{remark}

\begin{proof}
 Since $HH^\T \in \mathbb{R}^{(\dx + \du )\times (\dx + \du)}$ is the outer product of two tall matrices, with an identity of size $\dx$ in the first, top-left, block, we have $\dim \ker HH^\T = \du$. Moreover, 
 \begin{align*}
 \ker HH^\T = \{ (x,u ) \in \mathbb{R}^{\dx +\du} : x = - K^\T u \}
 \end{align*}
 so that for any $w\in \mathbb{R}^{\dx}$, any vector, $\theta$ of the form ($u\in \R^{\du}$)
 \begin{align}
 \label{ch6:thetaform}
\theta = \begin{bmatrix}
-K^\T u\\
u
\end{bmatrix}
\otimes w = \begin{bmatrix}
-(K^\T u) \otimes w\\
u \otimes w
\end{bmatrix}
=
\begin{bmatrix}
-(K^\T \otimes I_{\dx}) u \otimes w\\
u \otimes w
\end{bmatrix}
\end{align}
satisfies $\theta \in \ker HH^\T \otimes \Sigma_W^{-1}$. We note that the dimension of the span of such $\theta$ is $\dx \du$, since there are no constraints on the choice of $u$ and $w$. Moreover, all such $\theta$ satisfying (\ref{ch6:thetaform}) can be obtained as the image of the composition of the vectorization operator with $ \Delta \mapsto \begin{bmatrix} -\Delta K & \Delta \end{bmatrix} \in \mathbb{R}^{\dx\times (\dx +  \du)}$. To see this, write
\begin{align}
\label{ch6:simchoform}
\VEC \begin{bmatrix} -\Delta K & \Delta \end{bmatrix} = \begin{bmatrix}
-\VEC \Delta K \\
\VEC \Delta 
\end{bmatrix}
=\begin{bmatrix}
-(K^\T \otimes I) \VEC \Delta\\
\VEC \Delta
\end{bmatrix}
\end{align}
so that identification follows by setting $\VEC \Delta = u\otimes w$ in (\ref{ch6:thetaform}).

We recall Lemma~2.1 from \cite{simchowitz2020naive} (see also \cite{abeille2018improved}) which establishes that
\begin{align}
\label{ch6:simchoder}
 \frac{d}{da} K(A-a\Delta K(A,B), B+\Delta)\Big|_{a=0} = -(R+B^\T P B)^{-1} \Delta^\T P (A+BK)
\end{align}
where we have allowed ourselves some abuse of notation in the obvious identification of $K(A,B) = K(\theta)$ to ease the translation from \cite{simchowitz2020naive}. Now, what is important is that, provided that $A+BK$ is nonsingular (\ref{ch6:simchoder}) is non-zero for all non-zero $\Delta \in \mathbb{R}^{\dx\times \du}$, $\dop_\theta \VEC K(\cdot)$ has non-zero action on $\theta$ as in  (\ref{ch6:thetaform}). Hence combining (\ref{ch6:thetaform}) and (\ref{ch6:simchoform}) with (\ref{ch6:simchoder}) implies that $\dim \mathtt{U} \geq \dx \du$. However, this is maximal since the rank of $HH^\T \otimes \Sigma_W^{-1}$ is $\dx^2$. Hence $\dim \mathtt{U} = \dx \du$ and the subspace $\mathtt{U}$ is in fact unique (it consists of the entire kernel of the Fisher information).
\end{proof}

In other words, what we have shown is the orthogonality of the two nullspaces defined in \Cref{prop:charpropsf} when specialized to the LQR setup with unknown and unstructured $A$ and $B$ matrix. It is interesting to note that we arrive at the variation of parameters $\begin{bmatrix} A-\Delta K & B+\Delta \end{bmatrix}$ after the change of coordinates (\ref{ch6:thetaform}-\ref{ch6:simchoform}) as a consequence of checking paramter variations that lead to a degenerate information matrix, whereas  \cite{simchowitz2020naive} arrives at the same variation by directly considering variations which generate indistinguishable trajectories. Of course, these perspectives are nearly equivalent, as the singularity of Fisher information implies the (local) indistinguishability of the distributions of the trajectories. 

Moreover, Proposition~\ref{simchoprop} is also related to a much earlier observation of \citet{polderman1986necessity}. He established the neccesity of identiying the true parameter $\theta$ to identify $K(\theta)$ which is mirrored in our result. We show that no elements in the nullspace of Fisher information at $\theta$ are in the nullspace of the derivative of $K(\theta)$. In other words, small variations in the parameter space with singular information under the optimal policy yield small variations in optimal policy. Proposition~\ref{simchoprop} can thus be seen as the local analague of Polderman's identifiability result.

\section{Regret Lower Bounds for State Feedback Systems} \label{sec:sf}

\begin{theorem}
\label{thm:sfregretlb}
Fix $d\leq \dim \mathtt{U}$, fix $\e>0$ and assume that the system $(\theta, A(\cdot),B(\cdot), Q,R,\Sigma_W)$ is $\e$-locally uninformative and $(\mathtt{U},L)$-information-regret-bounded. Suppose A1 and A2 apply for every $\theta'\in B(\theta,\e)$ and let $\Psi$ satisfy:
\begin{align*}
    \Psi \preceq  \frac{1}{2}\sum_{j=0}^{T^{1/16}}(A(\theta')+B(\theta')K(\theta'))^j \Sigma_W  ((A(\theta')+B(\theta')K(\theta'))^\T)^j.
\end{align*}
for all $\theta' \in B(\theta,\e)$, and let $N_{\star}$ be any matrix satisfying $N_{\star} \preceq (B^\T(\theta') P(\theta')B(\theta')+R)$ for all $\theta' \in B(\theta,\e)$.

Fix now $d \in \{1,\dots,\dim \mathtt{U}\}$. There exists an orthonormal matrix $W_0$ with $\spn W_0 \subset \mathtt{U}$ such that for any smooth, compactly supported prior $\mu$ on $\{\theta+W_0 v: \| v\| \leq \e \}$ we have that:
\begin{equation}
    \begin{aligned}\label{eq:sfregretlb}
    & \sup_{\theta' \in B(\theta,\e)}\mathsf{R}_T^\pi(\theta')\\
    &\geq  \sqrt{T} \sqrt{\frac{1+\dim \mathtt{U}-d}{8L}}\sqrt{ \inf_{\theta',\tilde\theta \in B(\theta,\e)}\tr \Bigg((\Psi\otimes N_\star)(\dop_\theta \VEC(K(\theta'))
 W_0W_0^\T (\dop_\theta \VEC(K(\theta'))^\T\Bigg)}.
    \end{aligned}
\end{equation}
as long as $\sqrt{T} \geq\frac{(1+\dim \mathtt{U}-d)\lambda_{\max}(\J(\mu))}{LC}$ and 
\begin{align}\label{eq:burninsfregretlb}
    T\geq \sup_{\theta'\in B(\theta,\e)}\mathsf{poly}\left(9^{\dx},\opnorm{B},\opnorm{A},\opnorm{Q^{-1}}\opnorm{R},\opnorm{R^{-1}},\opnorm{B},\opnorm{P},\opnorm{K},\opnorm{\Sigma_W},C \right)
\end{align}
for a universal polynomial function $\mathsf{poly}$ and where:
\begin{align*}
    C \triangleq   \sqrt{\frac{(1+\dim \mathtt{U}-d)}{ 8 L C }}\sqrt{ \inf_{\theta',\tilde\theta \in B(\theta,\e)}\tr \Bigg((\Psi\otimes N_\star)(\dop_\theta \VEC(K(\theta'))
 W_0W_0^\T (\dop_\theta \VEC(K(\theta'))^\T\Bigg)}.
\end{align*}

\end{theorem}

The result can be significantly simplified if stated asymptotically.

\begin{corollary}
Assume that A1 and A2 apply to the system $(\theta, A(\cdot),B(\cdot), Q,R,\Sigma_W)$. We further assume that this instance is $\e$-locally uninformative for some $\e>0$ and $(\mathtt{U},L)$-information-regret-bounded. Let $d \in \{1,\dots,\dim \mathtt{U}\}$. There exists a matrix $W_0$ with $d$ orthonormal columns which all lie in $\mathtt{U}$ such that for any admissible policy $\pi$ and any $\alpha\in (0,1/4)$:
\begin{equation}
    \begin{aligned}\label{eq:asymptoticlb}
    & \liminf_{T\to \infty} \sup_{\theta' \in B(\theta,T^{-\alpha})}\frac{\mathsf{R}_T^\pi(\theta')}{\sqrt{T}}\\
    &\geq   \sqrt{\frac{1+\dim \mathtt{U}-d}{8L}}\sqrt{ \tr \Bigg((\Sigma^\star_X(\theta)\otimes (B^\T(\theta)P(\theta)B(\theta)+R))(\dop_\theta \VEC(K(\theta))
 W_0W_0^\T (\dop_\theta \VEC(K(\theta))^\T\Bigg)}
    \end{aligned}
\end{equation}
where
\begin{align}\label{eq:sigmastardef}
    \Sigma^\star_X(\theta) \triangleq \lim_{T\to\infty} \frac{1}{T}\sum_{t=0}^{T-1}\E_{\theta}^{\pi_\star(\theta)} \left[X_tX_t^\T \right].
\end{align}
\end{corollary}
The proof is a straightforward application of \Cref{thm:sfregretlb} invoking the continuity of the problem parameters. The only critical detail is to notice that by choosing $\e = T^{-\alpha}$ with $\alpha \in (0,1/4)$ we may choose the prior $\mu$ such $\tr \J(\mu) \lesssim T^{2\alpha}$. Hence for such a choice of $\mu$ the burn-in pertaining to $\mu$ in \Cref{thm:sfregretlb}, $\sqrt{T} \gtrsim \tr \J(\mu)$ is satisfied asymptotically. We now comment on the hardness result \eqref{eq:asymptoticlb}.
\begin{itemize}
    \item First and foremost, logarithmic regret is impossible under the hypotheses: the conjunction of uninformativeness and information-regret-boundedness implies that there is insufficient curvature near the optimal instance making significant exploration necessary. This leads to regret on the order of magnitude $\sqrt{T}$.
       \item By construction, the columns of $W_0$ are orthogonal to the nullspace of $\dop_\theta \VEC(K(\theta))$. Hence the trace appearing in the square root in \eqref{eq:asymptoticlb} is nonzero (with dimensional dependence $d$).
    \item There is considerable flexibility in the parametrization. This allows us to "concentrate" the lower bound on particular system parameters which we might expect to be particularly hard to learn. We will pursue this theme further in the next section.
    \item The lower bound is proportional to $\sqrt{\frac{\dim \mathtt{U}}{L}}$. This factor captures the tension between exploration and exploitation. As noted before $\dim \mathtt{U}$ captures the number of directions not excited by the optimal policy while the term $L$ captures the sensitivity of the cost to exploration in these directions.
    \item The factors $\Sigma_X^\star$ and $P$ can be thought of as to capture the control-theoretic hardness of the particular instance under consideration. These will be large if the optimal policy operates near marginal stability.
    \item The term $\dop_\theta \VEC(K(\theta))$ does not have any intrinsic interpretation. It is simply a jacobian term arising in the implicit change of variables appearing as a consequence of our choice of parameter geometry before applying \Cref{thm:vtineq}. Put differently; this term arises since we seek to estimate $K(\theta)$ and not $A(\theta)$ or $B(\theta)$ which are the matrices that we have parametrized.
\end{itemize}

\paragraph{Improving the Lower bound of \cite{simchowitz2020naive}}

In the setting in which both $A$ and $B$ are completely unknown $\VEC \begin{bmatrix} A(\theta) & B(\theta) \end{bmatrix} = \theta$, we have the following result, recovering an earlier result of \cite{simchowitz2020naive} but with  improved constants.

\begin{corollary}\label{statefeedbackcorr}
Assume that A1 and A3 hold for the fixed tuple $(A,B)$ with corresponding optimal policy $K$, Riccati matrix $P$ and state covariance matrix $\Sigma^\star_X$ (recall \eqref{eq:sigmastardef}).Then for every $\alpha \in (0,1/4)$ and any admissible policy $\pi$:
\begin{equation}\label{eq:simcorr1}
    \liminf_{T\to \infty} \sup_{ \substack{A' , B': \\ \| [A'-A \: B'-B] \|_{\mathsf{op}} \leq T^{-\alpha}}} \frac{ \mathsf{R}_T^\pi(A',B')}{\sqrt{T}}
\geq c \sqrt{
 \frac{ \du  \lambda_{\min}(\Sigma_W) \tr \Bigg((P[\Sigma^\star_X-I_{\dx}]P\Bigg)}{\dx(1+ \opnorm{KK^\T})}}
\end{equation}
for some universal positive constant $c>0$. Under the same hypotheses we also have that any admissible policy $\pi$ satifies:
\begin{equation}\label{eq:simcorr2}
    \liminf_{T\to \infty} \sup_{ \substack{A' , B': \\ \| [A'-A \: B'-B] \|_{\mathsf{op}} \leq T^{-\alpha}}} \frac{ \mathsf{R}_T^\pi(A',B')}{\sqrt{T}}\\
\geq c' \sqrt{\dx \du^2} \sqrt{
 \frac{  \lambda_{\min}(\Sigma_W) \lambda_{ \min}\left(P[\Sigma^\star_X-I_{\dx}]P\right) }{1+ \opnorm{KK^\T}}}.
\end{equation}
for some second universal positive constant $c'>0$.
\end{corollary}

The dimensional dependency in \Cref{eq:simcorr2} is optimal \citep{simchowitz2020naive}. However, comparing with \cite{simchowitz2020naive} our lower bound exhibits improved scaling in system-theoretic constants. This is especially true of \Cref{eq:simcorr1}. Their bound scales as $1/\opnorm{P^2(\theta)}$ which becomes small if the optimal controller is operates near marginal stability. By contrast, our bound captures the intuition that systems that are hard to optimally regulate are also hard to learn optimally regulate. To appreciate this contrast, consider for instance an open loop unstable scalar system:
\begin{align}\label{eq:scalarsystemtobeplotted}
    X_{t+1}=aX_t + bU_t +W_t
\end{align}
with $|a|\geq 1$. If we instantiate our lower bound and let $|b|\to 0$, the right hand side of \Cref{eq:simcorr1} tends to positive infinity (to see this, use $K^\T R K \preceq P$). By contrast, the corresponding lower bound in \cite[Theorem 1]{simchowitz2020naive}---which is proportional to $1/\opnorm{P}^2$---tends to 0---see also \Cref{fig:thefig}.

\begin{figure}[h]
    \centering
    \includegraphics[scale=0.8]{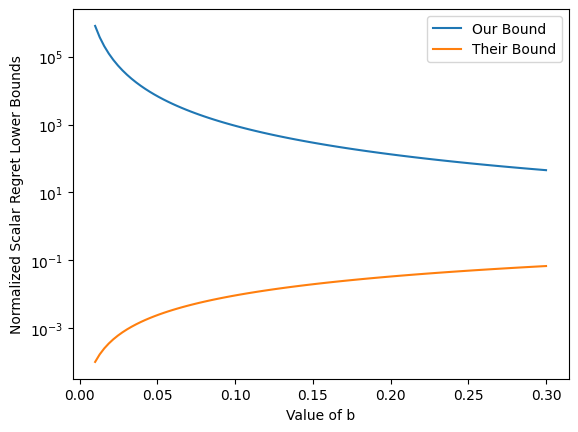}
    \caption{Above, we plot the lower bound of \cite{simchowitz2020naive} and compare it to ours for the scalar system in \eqref{eq:scalarsystemtobeplotted} with $a=1$ and $b$ varying. Noise variance, $Q$ and $R$ are all chosen to be unity. We have omitted the time dependency---which is the same for both bounds---and chosen the larger of the two values appearing in the minimum in the bound of \cite{simchowitz2020naive}. Our lower bound is stronger in this regime, and in fact diverges as $b\to 0$, whereas theirs tends to $0$. In other words, our bound reflects the fact that learning to control becomes harder as control authority is lost.}
    \label{fig:thefig}
\end{figure}

\section{Extension to Partially Observed Systems}\label{sec:po}

\label{ch:regretpo}
In this section we seek to understand the hardness of adaptive LQR in the partially observed setting \eqref{eq:lds_ac}-\eqref{eq:plds_ac}. We will see that there are two new failure modes that arise due to poor observability of the inputs and due to poor observability of the state.

\subsection{An Easy-to-Analyze Family of Systems}
It is more difficult to analyze the reduction of \Cref{lem:regretlbrelaxed} when the optimal filter is nontrivial. Here, we will circumvent this issue by considering a particular class of systems in which the effect of filtering can be separated from that of control in the context of regret minimization. Let $\dx$ be divisible by $3$ and consider:
\begin{equation}
    \begin{aligned}
    A(\theta) = \begin{bmatrix} A_{11} &0 &0\\
    I_{\dx/3}& 0 &0\\
    0&0&0
    \end{bmatrix} \qquad 
    B(\theta) =
    \begin{bmatrix}
    0_{\dx/3\times \du}\\ B_2(\theta) \\ B_2(\theta)
    \end{bmatrix}\qquad
    C(\theta) = \begin{bmatrix} C_{11} & 0 & 0\\ 0 & 0 & C_{23} \end{bmatrix}
    \end{aligned}
\end{equation}
where $A_{11}\in \R^{\dx/3\times\dx/3}$ is a fixed stable matrix, $C_{11}, C_{23}\in \mathbb{R}^{\dx/3 \times \dx/3}$ are also fixed and $0\in \R^{\dx/3 \times \dx/3}$, such that the only uncertainty in the parameters is involved in actuation (via $B_2(\theta)\in \R^{\dx/3\times \du}$). For simplicity fix $\lambda>0$ and set $Q= I_{\dx}$ and $R=\lambda I_{\du}$. To simplify the exposition, we will also set $\theta= \VEC B_2(\theta)$. Let us also assume that $\Sigma_W\succ 0$ and $\Sigma_V\succ 0$ have the following block structure
\begin{align*}
    \Sigma_W &= \begin{bmatrix}\Sigma_{W_1} &0 &0\\ 0 & \Sigma_{W_2} &0 \\ 0 &0 & \Sigma_{W_3}\end{bmatrix},\\
      \Sigma_V &= \begin{bmatrix}\Sigma_{V_1} &0 \\ 0 & \Sigma_{V_2} \end{bmatrix},
\end{align*}
and that as before the sequence $\{W_t\}_{t\in \N}$ and $\{V_t\}_{t\in \N}$ are mutually independent and \iid\ Gaussian.

This gives rise to the following family of linear dynamical systems:
\begin{equation}\label{eq:structuredposystem}
    \begin{aligned}
    \begin{bmatrix}X_{t+1}^1 \\ X_{t+1}^2 \\ X_{t+1}^3\end{bmatrix}& = \begin{bmatrix} A_{11} &0 &0\\
    I_{\dx/3}& 0 &0\\
    0&0&0
    \end{bmatrix} \begin{bmatrix}X_{t}^1 \\ X_{t}^2 \\ X_{t}^3\end{bmatrix}
    +
    \begin{bmatrix}
    0_{\dx/3\times \du}\\ B_2(\theta) \\ B_2(\theta)
    \end{bmatrix}U_t + \begin{bmatrix}
    W_t^1\\W_t^2\\W_t^3
    \end{bmatrix}\\
    \begin{bmatrix}Y_t^1\\Y_t^2 \end{bmatrix}&=
     \begin{bmatrix} C_{11} & 0 & 0\\ 0 & 0 & C_{23}  \end{bmatrix}\begin{bmatrix}X_{t}^1 \\ X_{t}^2 \\ X_{t}^3\end{bmatrix} + \begin{bmatrix} V_t^1 \\ V_t^2 \end{bmatrix}.
    \end{aligned}
\end{equation}
Notice that the first (block-)coordinate of $Y$ is a noisy observation of the first coordinate of the state: $Y_t^1 = C_{11} X_t^1+V_t^1$. The second coordinate of $Y$ is simply a noisy observation of the input action $Y_t^2 = C_{23}B_2(\theta)U_{t-1}+C_{23}W_{t-1}^3+V_t^2$. Let us also note that for A1 to hold---i.e. for $(A(\theta),C(\theta))$ to be detectable---it is sufficient that $(A_{11},C_{11})$ is detectable. Since $A_{11}$ is stable by assumption, A1 is immediate. However, \eqref{eq:structuredposystem} is not observable.

The structure \eqref{eq:structuredposystem} means that the regret has the following simple representation.

\begin{lemma}
Assume that A1-A3 holds. For every instance of the form \eqref{eq:structuredposystem}:
\begin{equation}
    \begin{aligned}
    \mathsf{R}_T^\pi(\theta) = \sum_{t=0}^{T-1}\E\left[ (U_t-K_1(\theta)\E_\theta^\pi[X_t^{1}|\mathcal{Y}_t ])^\T (2B_2^\T(\theta) B_2(\theta) +\lambda I_{\du}) (U_t-K_1(\theta)\E_\theta^\pi[X_t^{1}|\mathcal{Y}_t ]) \right]
    \end{aligned}
\end{equation}
where
\begin{align}\label{eq:k1inpobs}
    K_1(\theta) =-(2 B_2^\T(\theta) B_2(\theta)+\lambda I_{\du})^{-1}B_2^\T(\theta).
\end{align}
\end{lemma}

\begin{proof}
Let us write out the cost:
\begin{align*}
    \mathsf{V}_T^\pi(\theta) &= \E_\theta^\pi\sum_{t=0}^{T-1}   \|X_{t}^1\|_2^2+ \|X_{t}^2\|_2^2+\|X_{t}^3\|_2^2+\lambda \|U_t\|_2^2\\
    &= \E_\theta^\pi\sum_{t=0}^{T-1}   \|X_{t}^1\|_2^2+ \|X_{t-1}^1+B_2(\theta )U_{t-1}+W_{t-1}^2\|_2^2+\|B_2(\theta)U_{t-1}+W_{t-1}^3\|_2^2+\lambda \|U_t\|_2^2.
\end{align*}
Since $\E_\theta^\pi\|X_t^1\|_2^2$ is independent of $\pi$, it follows immediately  that the optimal state feedback policy is structured as:
\begin{align*}
    K(\theta) =\begin{bmatrix}
        K_1(\theta) & 0 & 0
    \end{bmatrix}
\end{align*}
where it can readily by computed that $K_1$ is as in \eqref{eq:k1inpobs}. The result follows by invoking \Cref{regretlemma}. 
\end{proof}

The next step is to notice that $\E_\theta^\pi[X_t^1|\mathcal{Y}_t]$ is independent of both both $\theta$ and $\pi$. To see this, notice that as long as A1 and A3 hold, the evolution of $\E_\theta^\pi[X_t^1|\mathcal{Y}_t]=\hat X_t^1$ is given by the following filtering equation
\begin{align}
\label{eq:stateestsillypobs}
\hat X_{t+1}^1& = A_{11}\hat X_t +  F_1 [Y_{t+1}^1-C_{11}A_{11}\hat X_t^1)]
\end{align}
where we recall that filter gain $F_1$ is characterized by
\begin{align}
\label{eq:sDAREsillypobs}
S_{1} &= A_{11}S_{1}A_{11}^\T-A_{11}S_{1}C_{11}^\T(C_{11}S_1C_{11}^\T + \Sigma_{V_1})^{-1}C_{11}S_1A_{11}^\T+\Sigma_{W_1}\\
\label{eq:lDAREsillypobs}
F_1 &= S_1 C_{11}^\T(C_{11} S_{1}C_{11}^\T+\Sigma_{V_1})^{-1}.
\end{align}
All the quantities appearing in \eqref{eq:stateestsillypobs}, \eqref{eq:sDAREsillypobs} and \eqref{eq:lDAREsillypobs} are independent of both $\theta$ and $\pi$. Invoking \Cref{lem:regretlbrelaxed} we obtain the following relaxation.
\begin{lemma}
\label{lem:regretllbrelaxededPO}
Fix $\theta\in \mathbb{R}^{d_\Theta}$ and $\e>0$. Assume that A1-A3 hold for every $\theta \in B(\theta,\e)$. Fix a function $\tau : \mathbb{Z} \to \mathbb{Z}$ with $\tau(t) \geq t$ and a smoothly compactly supported prior $\mu \in C^\infty_c[B(\theta,\e)]$. Then for every instance of form \eqref{eq:structuredposystem}:
\begin{multline}
\label{eq:regretlbrelaxedPO}
 \sup_{\theta'\in B(\theta,\e)}\mathsf{R}_T^\pi(\theta')\\
    \geq\sum_{t=0}^{T-1} \E_{\Theta \sim \mu}\E_{\Theta}^\pi \tr \Big[N_\star  \left( \E[K_1(\Theta)|\mathcal{Y}_{\tau(t)}]-K_1(\Theta)\right) \left(\hat X_t^1 (\hat X_t^1)^\T \right) \left( \E[K_1(\Theta)|\mathcal{Y}_{\tau(t)}]-K_1(\Theta) \right)^\T \Big].
\end{multline}
for any matrix  $N_\star$ satisfying $N_\star\preceq 2B_2^\T(\theta')B_2(\theta') +\lambda I_{\du}$ for every $\theta' \in B(\theta,\e)$, where $K_1$ is as in \eqref{eq:k1inpobs} and where $\hat X_{t+1}^1$ is given by \eqref{eq:stateestsillypobs}.
\end{lemma}


\subsection{Fisher Information Bounds}

The program below parallels the development for the state feedback setting. That is, we seek to characterize the information matrix for policies with low regret ($O(\sqrt{T})$). We begin by providing an expression analogous to \eqref{statefeedbackinfo} for the system \eqref{eq:structuredposystem}.

\begin{lemma}
Fix $T\in \mathbb{N}$ and consider the system \eqref{eq:structuredposystem}. The Fisher information under any policy $\pi$ is given by:
\begin{align}
\label{eq:structuredposysteminfo}
\I(\pi;\theta,T)  =\E^\pi_\theta \sum_{t=0}^{T-1}  [\dop_\theta[C_{23}B_2(\theta)U_t]] ^\T (C_{23}\Sigma_{W_3}C_{23}^\T+\Sigma_{V_2} )^{-1} \dop_\theta[C_{23}B_2(\theta)U_t].
\end{align}
\end{lemma}

\begin{proof}
Note that the only part of the observation $Y$ that depends on $\theta$ is $Y^2$. The result now follows by the chain rule for Fisher information \eqref{eq:fichain} and the conditional dependence structure
\begin{align*}
Y_{t}^2 | (\mathtt{AUX},Y_0,\dots,Y_{t-1})& \sim \mathsf{N}(C_{23}B(\theta)U_t,C_{23}\Sigma_{W_3}C_{23}^\T+\Sigma_{V_2})
\end{align*}
and \Cref{lem:Gaussianfisher}.
\end{proof}

Equipped with \Cref{eq:structuredposysteminfo}, uninformativeness (recall \Cref{uninfdef}) is readily characterized as follows below.

\begin{proposition}
\label{prop:charproppo}
For any $\theta$, the instance \eqref{eq:structuredposystem} is $(\mathtt{U},\e)$-locally uninformative if for every $v \in \mathtt{U}\setminus\{0\}$:
\begin{equation}
\begin{aligned}
\label{eq:uninformativecharpo}
 v \in& \ker   \left[ K_1(\theta)K_1^\T(\theta)  \otimes  C_{23}^\T C_{23} \right]\\
 v\notin& \ker \dop_\theta \VEC K_1(\theta)
\end{aligned}
\end{equation}

\end{proposition}

\begin{proof}
Write, for each $t$ and $\tilde \theta\in B(\theta,\e)\cap \mathtt{U}$, 
\begin{align*}
   & [\dop_\theta[C_{23}B_2(\tilde \theta)U_t]] ^\T (C_{23}\Sigma_{W_3}C_{23}^\T+\Sigma_{V_2} )^{-1} \dop_\theta[C_{23}B_2(\tilde \theta)U_t]\\
   &=[(U_t^\T \otimes C_{23})\dop_\theta \VEC [B_2(\tilde \theta)]] ^\T  [(U_t^\T \otimes (C_{23}\Sigma_{W_3}C_{23}^\T+\Sigma_{V_2} )^{-1} C_{23})\dop_\theta \VEC [B_2(\tilde \theta)]]\\
   &=[\dop_\theta \VEC [B_2(\tilde \theta)]] ^\T  [(U_tU_t^\T \otimes C_{23}^\T(C_{23}\Sigma_{W_3}C_{23}^\T+\Sigma_{V_2} )^{-1} C_{23})\dop_\theta \VEC [B_2(\tilde \theta)]]\\
   &=  [(U_tU_t^\T \otimes C_{23}^\T(C_{23}\Sigma_{W_3}C_{23}^\T+\Sigma_{V_2} )^{-1} C_{23})
\end{align*}
where we used that $\tilde \theta = \VEC B_2(\tilde \theta)$. Notice now that taking expectation under $(\pi_\star(\theta),\tilde \theta)$:
\begin{align*}
\ker \E^{\pi_\star}_{\tilde \theta} U_t U_t^\T \subset \ker  K_{1}(\theta) K_1^\T(\theta).
\end{align*}
Since $C_{23}^\T C_{23}$ has the same nullspace as $C_{23}^\T(C_{23}\Sigma_{W_3}C_{23}^\T+\Sigma_{V_2} )^{-1} C_{23}$, the result follows by \Cref{eq:structuredposysteminfo}.
\end{proof}

Let us now investigate  the second part of condition \eqref{eq:uninformativecharpo}.
\begin{lemma}
\label{lem:kerklemma}
The nullspace $\ker  \dop_\theta \VEC K_1(\theta)$ is equal to the set of solutions $v$ of 
\begin{align}
\label{ch4:goodsamaritan}
v= 2 \Pi ^{-1} \left[ ( K_1^\T B_2^\T \otimes I_{\du})\Pi +(K_1^\T \otimes B_2^\T) \right]v.
\end{align}
where $K_1$ and $B_2$ are evaluated at $\theta$ and where the permutation matrix $\Pi$ maps  $ \VEC  M$ to $\VEC M^\T$.
\end{lemma}

\begin{proof}
We seek solutions $v=\VEC \diff B_2 \in \mathbb{R}^{d_\Theta}$ to 
\begin{align*}
    \dop_\theta \VEC K_1(\theta)  v =0.
\end{align*}
By the identification theorem for matrix derivativs \citep[Theorem 5.11]{magnus2019matrix}, we may equivalently seek solutions to $\diff K_1=0$. We compute the differential of $K_1$ with respect to $B_2=\VEC^{-1}\theta$ as
\begin{multline*}
\diff (2B_2^\T B_2 +\lambda I_{\du})^{-1} B_2^\T\\
= (2B_2^\T B_2+ \lambda I_{\du})^{-1}(\diff B_2)^\T -2 (2B_2^\T B_2+\lambda I_{\du})^{-1}[(\diff B_2)^\T B + F^\T \diff B_2] (2B_2^\T B_2+\lambda I_{\du})^{-1}B_2^\T.
\end{multline*}
Multiplying by $(2B_2^\T B_2+\lambda I_{\du})$, we see that this can be set to zero if and only if there is a root $\diff B_2$ to the equation
\begin{align*}
(\diff B_2)^\T =2 [(\diff B_2)^\T B_2 + B_2^\T \diff B_2] (2B_2^\T B_2+\lambda I_{\du})^{-1}B_2^\T.
\end{align*}
Vectorizing, we find the equivalent equation
\begin{multline*}
\VEC (\diff B_2)^\T =2 \VEC \left( [(\diff B_2)^\T B_2 + B_2^\T \diff B_2] (2B_2^\T B_2+\lambda I_{\du})^{-1}B_2^\T\right)\\
=2( B_2  (2B_2^\T B_2+ \lambda I_{\du})^{-1} B_2^\T \otimes I_{\du}) \VEC \diff B_2^\T +(B_2(2B_2^\T B_2+\lambda I_{\du})^{-1} \otimes B_2^\T) \VEC \diff B_2.
\end{multline*}
Introducing $\Pi$ such that $\VEC (\diff B_2)^\T = \Pi  \VEC \diff B_2$, this is in turn equivalent to the eigenvalue equation
\begin{multline*}
\begin{aligned}
&\VEC \diff B_2 \\
=& 2\Pi^{-1} \left[ ( B_2  (2B_2^\T B_2+ \lambda I_{\du})^{-1} B_2^\T \otimes I_{\du})\Pi  \VEC \diff B_2 +(B_2(2B_2^\T B_2+\lambda I_{\du})^{-1} \otimes B_2^\T) \VEC \diff B_2\right]\\
=& 2 \Pi ^{-1} \left[ ( K_1^\T B_2^\T \otimes I_{\du})\Pi  \VEC \diff B_2 +(K_1^\T \otimes B_2^\T) \VEC \diff B_2\right].
\end{aligned}
\end{multline*}
This completes the proof.
\end{proof}

The next proposition exploits the geometric description of $\ker \dop_\theta \VEC K_1(\theta)$ provided in Lemma~\ref{lem:kerklemma} to give a more direct characterization of uninformativeness.

\begin{proposition}
\label{prop:strucutedpouninf}
Fix $\theta \in \mathbb{R}^{d_\Theta}$, $\e>0$ and suppose that $K_1(\theta)K_1^\T(\theta)$ is singular. Set
\begin{align}\label{eq:pouninfspace}
\mathtt{U} = \{  u \otimes w \in \mathbb{R}^{d_\theta} :  u \in \ker K_1(\theta)K_1^\T(\theta),  w \in \mathbb{R}^{\dx/3} \}.
\end{align}
The instance \eqref{eq:structuredposystem} is $(\mathtt{U},\e)$-uninformative.
\end{proposition}

Note that $\ker K_1(\theta)K_1^\T(\theta)$ is nontrivial as soon as $\du > \dx/3$.

\begin{proof}
We need to prove that no nonzero vector in $\mathtt{U}$ is in $\ker  \dop_\theta \VEC K_1(\theta)$. The idea is to use Lemma~\ref{lem:kerklemma}, and establish that no nonzero vector $v \in \mathtt{U}$  belongs to the eigenspace of the eigenvalue $1$ of $2 \Pi ^{-1} \left[ ( K_1^\T B_2^\T \otimes I_{\du})\Pi +(K_1^\T \otimes B_2^\T) \right]$ appearing in (\ref{ch4:goodsamaritan}).

To that end,  consider any vector of the form $ v =  u \otimes  w $ where $ u \in \ker K_1K_1^\T = \ker K^\T$ and $ w \in \mathbb{R}^{\dx/3}$. Notice that the action of $\Pi$ on vectors $ u \otimes  w$ is described by $\Pi u \otimes  w =  w\otimes u$. We may thus compute:
\begin{equation}
\begin{aligned}
\label{ch4:anothersamaritan}
&2\left[ (K_1^\T B_2^\T \otimes I_{\du})\Pi   +(K_1^\T \otimes B_2^\T) \right] ( u\otimes  w)\\
 &= 2( K_1^\T B_2^\T \otimes I_{\du})\Pi ( u\otimes  w)  + (K_1^\T \otimes B_2^\T) (u\otimes  w) \\
&=2( K_1^\T B_2^\T \otimes I_{\du})(w\otimes  u)  + (K_1^\T \otimes B_2^\T) ( u\otimes  w)\\
&=2( K_1^\T B_2^\T  w\otimes I_{\du} u) + (\underbrace{K_1^\T u}_{=0} \otimes B_2^\T w)\\
&=2( K_1^\T B_2^\T \otimes I_{\du}) ( w \otimes   u)
\end{aligned}
\end{equation}
using $\ker K_1K_1^\T = \ker K_1^\T$.

We shall now prove that the spectrum of $2( K_1^\T B_2^\T \otimes I_{\du}) $ is contained in $(-\infty, 1)$. Since this does not include $1$, it then follows by Lemma~\ref{lem:kerklemma} that all nonzero elements of $\mathtt{U}$ are not in $\ker  \dop_\theta \VEC K_1(\theta)$. To show this, it suffices to prove that  the spectrum of $2( K_1^\T B_2^\T) $ is contained in $(-\infty, 1)$ due to the structure of the eigenvalues for Kroenecker products. 

For the next step, we may assume without loss of generality that $w \in (\ker B_2^\T)^\perp$ by simply decomposing $w$. Notice now that for all such $w \in (\ker B_2^\T)^\perp$ and for $\lambda' \in (0,\lambda)$, we have that
\begin{align}
\label{ch4:monorelation}
 2w^\T K_1^\T B_2^\T w =2w^\T  B_2  (2B_2^\T B_2 +\lambda I_{\du})^{-1} B_2^\T  w < 2w^\T  B_2  (2B_2^\T B_2 +\lambda' I_{\du})^{-1} B_2^\T w.
\end{align}
Moreover, 
\begin{align*}
\lim_{\lambda'\to 0}  2B_2  (2 B_2^\T B +\lambda' I_{\du})^{-1} B_2^\T  w =B_2^\dagger B_2  w =  w
\end{align*}
since $w \in (\ker B_2^\T)^\perp$. Combining this observation with (\ref{ch4:monorelation}) we obtain the inequality
\begin{align*}
 w^\T  (2K_1^\T B_2^\T) w < w^\T w,
\end{align*}
valid for all $w \in (\ker K_1^\T B_2^\T)^\perp$, which can only be true if the largest eigenvalue of $2K_1^\T B_2^\T$ is smaller than $1$ (note that the eigenvalues are nonnegative real since $ K_1^\T B_2^\T \succeq 0$). 
\end{proof}

\paragraph{Information Comparison}

The final preliminary lemma required will be used \Cref{statefeedbackinfocomp}. We again show that regret bounds the small singular values of the Fisher information.

\begin{lemma}
\label{lem:sillypobsinfocomp}
Fix $\theta \in \mathbb{R}^{d_\Theta}$, $\e>0$, consider the instance \eqref{eq:structuredposystem} and suppose that $K_1(\theta)K_1^\T(\theta)$ is singular. Set
\begin{align}
\mathtt{U} = \{  u \otimes w \in \mathbb{R}^{d_\Theta} :  u \in \ker K_1(\theta)K_1^\T(\theta),  w \in \mathbb{R}^{\dx/3} \}.
\end{align}
For every $\theta' \in B(\theta,\e)$ and every $T\in\N$:
\begin{align*}
\tr V_0^\T \I(\pi;\theta',T) V_0 \leq \tr (  C_{23}^\T(C_{23}\Sigma_{W_3}C_{23}^\T+\Sigma_{V_2} )^{-1} C_{23})\opnorm{(2B_2^\T B_2+\lambda I_{\du})^{-1}} \mathsf{R}_T^\pi(\theta)
\end{align*}
where as before, the columns of $V_0$ span $\mathtt{U}$.
\end{lemma}

\subsection{A Hardness Result for LQG}

Equipped with the characterizations of the (restricted) nullspace of \Cref{prop:charproppo} and \Cref{prop:strucutedpouninf} and the information-regret bound of \Cref{lem:sillypobsinfocomp} we are ready to establish the following analogue of \Cref{thm:sfregretlb}.

\begin{theorem}
\label{thm:poregretlb}
Fix $\e>0$ and consider the system \eqref{eq:structuredposystem}. Suppose A1 and A2 apply for every $\theta'\in B(\theta,\e)$. Let $d \in \{1,\dots,\dim \mathtt{U}\}$ where $\mathtt{U}$ is given by \Cref{eq:pouninfspace}. Fix also a smooth, compactly supported prior $\mu$ on $B(\theta,\e)$. 
There exists a polynomial function $\mathsf{poly}$ such that if 
\begin{align*}
      T \geq \mathsf{poly}\left(\dx,\log \left(\sum_{t=0}^\infty \opnorm{A_{11}}^t \right),\log \opnorm{F_1 (C_{11}S_1 C_{11}^\T+ \Sigma_{V_1})F_1^\T  }\right)
\end{align*}
Furtheremore, There exists an orthonormal matrix $W_0$ with $\spn W_0 \subset \mathtt{U}$ such that for any smooth, compactly supported prior $\mu$ on $\{\theta+W_0 v: \| v\| \leq \e \}$ as long as $\sqrt{T} \geq\frac{(1+\dim \mathtt{U}-d)\lambda_{\max}(\J(\mu))}{LC}$  we have that for any admissible policy $\pi$:
\begin{equation}
    \begin{aligned}\label{eq:poregretlb}
    & \sup_{\theta' \in B(\theta,\e)}\mathsf{R}_T^\pi(\theta)\\
    &\geq  \sqrt{T} \sqrt{\frac{1+\dim \mathtt{U}-d}{8L}}\sqrt{ \inf_{\theta',\tilde\theta \in B(\theta,\e)}\tr \Bigg((\Psi\otimes N_\star)(\dop_\theta \VEC(K_1(\theta'))
 W_0W_0^\T (\dop_\theta \VEC(K_1(\tilde\theta))^\T\Bigg)}.
    \end{aligned}
\end{equation}
where $\Psi$ is given by:
\begin{align*}
    \Psi =  \frac{1}{T}\sum_{t=0}^{T-1} \E \hat X_t^1 (\hat X_t^1)^\T- T^{-1/4}\left(\sum_{t=0}^\infty \opnorm{A_{11}}^t \right)^2 \times (I_{\dx/3})
\end{align*}
and where we define
\begin{align*}
L & \triangleq \tr (  C_{23}^\T(C_{23}\Sigma_{W_3}C_{23}^\T+\Sigma_{V_2} )^{-1} C_{23})\opnorm{(2B_2^\T B_2+\lambda I_{\du})^{-1}},\\
    C &\triangleq   \sqrt{\frac{(1+\dim \mathtt{U}-d)}{ 8 L C }}\sqrt{ \inf_{\theta',\tilde\theta \in B(\theta,\e)}\tr \Bigg((\Psi\otimes N_\star)(\dop_\theta \VEC(K_1(\tilde \theta))
 W_0W_0^\T (\dop_\theta \VEC(K_1(\theta'))^\T\Bigg)}
\end{align*}
and $N_{\star}$ is any matrix satisfying  $N_\star\preceq 2B_2^\T(\theta')B_2(\theta') +\lambda I_{\du}$ for all $\theta' \in B(\theta,\e)$.

\end{theorem}

In light of \Cref{lem:regretllbrelaxededPO} and \Cref{lem:sillypobsinfocomp}, the proof is almost identical to that of \Cref{thm:sfregretlb} and is thus omitted. The result can be significantly simplified if stated asymptotically.

\begin{corollary}
Consider the system \eqref{eq:structuredposystem}. Suppose A1 and A2 apply for every $\theta'\in B(\theta,\e)$. Let $d \in \{1,\dots,\dim \mathtt{U}\}$ where $\mathtt{U}$ is given by \Cref{eq:pouninfspace}. Let $L  \triangleq \tr (  C_{23}^\T(C_{23}\Sigma_{W_3}C_{23}^\T+\Sigma_{V_2} )^{-1} C_{23})\opnorm{(2B_2^\T B_2+\lambda I_{\du})^{-1}}$.

There exists a matrix $W_0$ with $d$ orthonormal columns which all lie in $\mathtt{U}$ such that for any admissible policy $\pi$ and any $\alpha \in (0,1/4)$:
\begin{equation}
    \begin{aligned}\label{eq:asymptoticpolb}
    & \liminf_{T\to \infty} \sup_{\theta' \in B(\theta,T^{-\alpha})}\frac{\mathsf{R}_T^\pi(\theta)}{\sqrt{T}}\\
    &\geq   \sqrt{\frac{1+\dim \mathtt{U}-d}{8L}}\sqrt{ \tr \Bigg((\Sigma_{\hat X^1}\otimes (2B_2^\T(\theta)B_2(\theta)+R))(\dop_\theta \VEC(K_1(\theta))
 W_0W_0^\T (\dop_\theta \VEC(K_1(\theta))^\T\Bigg)}
    \end{aligned}
\end{equation}
where $ \displaystyle  \Sigma_{\hat X^1} \triangleq \lim_{T\to\infty} \frac{1}{T}\sum_{t=0}^{T-1} \E \hat X_t^1 (\hat X_t^1)^\T$.
\end{corollary}

While the lower bound \eqref{eq:asymptoticpolb} is interpreted much like its fully observed analogue \eqref{eq:asymptoticlb} (in particular, see the ensuing discussion), there are two new failure modes that arise. First, as $\opnorm{C_{23}}$ tends to zero, the lower bound \eqref{eq:asymptoticpolb} diverges. In this case, the learner faces vanishing information available to them about $B_2$, which is needed to regulate the system. Second, it is no longer the "state" covariance that enters the bound \eqref{eq:asymptoticpolb}, but the \emph{filtered} state covariance $ \Sigma_{\hat X^1} $, which depends both on the stability and detectability of the first mode. In particular, this term does not diverge as stability is lost $\rho(A_{11}) \to 1$, but rather if $\rho(A_{11}) \to 1$ \emph{and} observability is lost (e.g. $\opnorm{C_{11}} \to 0$). In other words, regret minimization becomes hard as observability of unstable modes is lost.

It is also interesting to note that it has been proven by \cite{lale2020logarithmic} that logarithmic regret against the best possible, in hindsight, persistently exciting controller\footnote{We refer to \cite{lale2020logarithmic} for their definition, but this roughly corresponds to having well-conditioned Fisher information.} whenever the covariance of the measurement noise is positive definite, i.e., $\Sigma_V\succ 0$.   Unfortunately, it is not clear whether the optimal LQG controller is persistently exciting under these hypotheses and so the notion of regret in \cite{lale2020logarithmic} may differ from the standard one. With this in mind, we thus also prove a negative result, showing that $\Sigma_V\succ 0$ is not sufficient for logarithmic regret without further assumption. Finally, we point out that the system used in our construction is not minimal, in the sense that \eqref{eq:structuredposystem} is not observable. By contrast, known upper bounds apply to controllable and observable systems \citep{simchowitz2020improper, lale2020logarithmic}. Thus, while our bounds show that logarithmic regret is not \emph{always} possible, even with full rank output noise, it does not rule out this possibility in the exact setting of \cite{lale2020logarithmic}.

\section*{Acknowledgements}
This work was completed while Ingvar Ziemann was still at KTH. Ingvar Ziemann is supported by a Swedish Research Council International Postdoc grant.  Henrik Sandberg is supported by the Swedish Research Council (grant 2016-00861). The authors wish to express their gratitude to Bruce Lee, Anastasios Tsiamis, and Yishao Zhou for helpful suggestions and feedback.

\bibliographystyle{unsrtnat}
\bibliography{main.bib}

\begin{thebibliography}{59}
\providecommand{\natexlab}[1]{#1}
\providecommand{\url}[1]{\texttt{#1}}
\expandafter\ifx\csname urlstyle\endcsname\relax
  \providecommand{\doi}[1]{doi: #1}\else
  \providecommand{\doi}{doi: \begingroup \urlstyle{rm}\Url}\fi

\bibitem[Yang et~al.(2020)Yang, Caluwaerts, Iscen, Zhang, Tan, and
  Sindhwani]{yang2020data}
Yuxiang Yang, Ken Caluwaerts, Atil Iscen, Tingnan Zhang, Jie Tan, and Vikas
  Sindhwani.
\newblock Data efficient reinforcement learning for legged robots.
\newblock In \emph{Conference on Robot Learning}, pages 1--10. PMLR, 2020.

\bibitem[Silver et~al.(2017)Silver, Schrittwieser, Simonyan, Antonoglou, Huang,
  Guez, Hubert, Baker, Lai, Bolton, et~al.]{silver2017mastering}
David Silver, Julian Schrittwieser, Karen Simonyan, Ioannis Antonoglou, Aja
  Huang, Arthur Guez, Thomas Hubert, Lucas Baker, Matthew Lai, Adrian Bolton,
  et~al.
\newblock Mastering the game of go without human knowledge.
\newblock \emph{nature}, 550\penalty0 (7676):\penalty0 354--359, 2017.

\bibitem[Simchowitz and Foster(2020)]{simchowitz2020naive}
Max Simchowitz and Dylan Foster.
\newblock Naive exploration is optimal for online lqr.
\newblock In \emph{International Conference on Machine Learning}, pages
  8937--8948. PMLR, 2020.

\bibitem[Lai(1986)]{lai1986asymptotically}
Tze~Leung Lai.
\newblock {Asymptotically Efficient Adaptive Control in Stochastic Regression
  Models}.
\newblock \emph{Advances in Applied Mathematics}, 7\penalty0 (1):\penalty0
  23--45, 1986.

\bibitem[Lai and Wei(1986)]{lai1986extended}
Tze~Leung Lai and Ching-Zong Wei.
\newblock {Extended Least squares and their Applications to Adaptive Control
  and Prediction in Linear Systems}.
\newblock \emph{IEEE Transactions on Automatic Control}, 31\penalty0
  (10):\penalty0 898--906, 1986.

\bibitem[Lai and Robbins(1985)]{lai1985asymptotically}
Tze~Leung Lai and Herbert Robbins.
\newblock {Asymptotically Efficient Adaptive Allocation Rules}.
\newblock \emph{Advances in applied mathematics}, 6\penalty0 (1):\penalty0
  4--22, 1985.

\bibitem[Guo(1995)]{guo1995convergence}
Lei Guo.
\newblock {Convergence and Logarithm Laws of Self-Tuning Regulators}.
\newblock \emph{Automatica}, 31\penalty0 (3):\penalty0 435--450, 1995.

\bibitem[Feldbaum(1960{\natexlab{a}})]{feldbaum1960dual1}
AA~Feldbaum.
\newblock {Dual Control Theory. I}.
\newblock \emph{Avtomatika i Telemekhanika}, 21\penalty0 (9):\penalty0
  1240--1249, 1960{\natexlab{a}}.

\bibitem[Feldbaum(1960{\natexlab{b}})]{feldbaum1960dual2}
AA~Feldbaum.
\newblock {Dual Control Theory. II}.
\newblock \emph{Avtomatika i Telemekhanika}, 21\penalty0 (11):\penalty0
  1453--1464, 1960{\natexlab{b}}.

\bibitem[Simon(1956)]{simon1956dynamic}
Herbert~A Simon.
\newblock {Dynamic Programming under Uncertainty with a Quadratic Criterion
  Function}.
\newblock \emph{Econometrica, Journal of the Econometric Society}, pages
  74--81, 1956.

\bibitem[{\AA}str{\"o}m and Wittenmark(1973)]{aastrom1973self}
Karl~Johan {\AA}str{\"o}m and Bj{\"o}rn Wittenmark.
\newblock On self tuning regulators.
\newblock \emph{Automatica}, 9\penalty0 (2):\penalty0 185--199, 1973.

\bibitem[Kalman(1958)]{kalman1958design}
Rudolf Kalman.
\newblock {Design of a Self-Optimizing Control System}.
\newblock \emph{Trans. ASME}, 80:\penalty0 468--478, 1958.

\bibitem[Goodwin et~al.(1981)Goodwin, Ramadge, and Caines]{goodwin1981discrete}
Graham~C Goodwin, Peter~J Ramadge, and Peter~E Caines.
\newblock {Discrete Time Stochastic Adaptive Control}.
\newblock \emph{SIAM Journal on Control and Optimization}, 19\penalty0
  (6):\penalty0 829--853, 1981.

\bibitem[Lai and Wei(1982)]{lai1982least}
Tze~Leung Lai and Ching~Zong Wei.
\newblock Least squares estimates in stochastic regression models with
  applications to identification and control of dynamic systems.
\newblock \emph{The Annals of Statistics}, 10\penalty0 (1):\penalty0 154--166,
  1982.

\bibitem[Becker et~al.(1985)Becker, Kumar, and Wei]{becker1985adaptive}
Arthur Becker, P~Kumar, and Ching-Zong Wei.
\newblock {Adaptive Control with the Stochastic Approximation Algorithm:
  Geometry and Convergence}.
\newblock \emph{IEEE Transactions on Automatic Control}, 30\penalty0
  (4):\penalty0 330--338, 1985.

\bibitem[Campi and Kumar(1998)]{campi1998adaptive}
Marco~C Campi and PR~Kumar.
\newblock {Adaptive Linear Quadratic Gaussian Control: the Cost-Biased Approach
  Revisited}.
\newblock \emph{SIAM Journal on Control and Optimization}, 36\penalty0
  (6):\penalty0 1890--1907, 1998.

\bibitem[Abbasi-Yadkori and Szepesv{\'a}ri(2011)]{abbasi2011regret}
Yasin Abbasi-Yadkori and Csaba Szepesv{\'a}ri.
\newblock {Regret Bounds for the Adaptive Control of Linear Quadratic Systems}.
\newblock In \emph{Proceedings of the 24th Annual Conference on Learning
  Theory}, pages 1--26, 2011.

\bibitem[Ouyang et~al.(2019)Ouyang, Gagrani, and Jain]{ouyang2017control}
Yi~Ouyang, Mukul Gagrani, and Rahul Jain.
\newblock Posterior sampling-based reinforcement learning for control of
  unknown linear systems.
\newblock \emph{IEEE Transactions on Automatic Control}, 65\penalty0
  (8):\penalty0 3600--3607, 2019.

\bibitem[Dean et~al.(2018)Dean, Mania, Matni, Recht, and Tu]{dean2018regret}
Sarah Dean, Horia Mania, Nikolai Matni, Benjamin Recht, and Stephen Tu.
\newblock {Regret Bounds for Robust Adaptive Control of the Linear Quadratic
  Regulator}.
\newblock In \emph{Advances in Neural Information Processing Systems}, pages
  4188--4197, 2018.

\bibitem[Abeille and Lazaric(2018)]{abeille2018improved}
Marc Abeille and Alessandro Lazaric.
\newblock {Improved Regret Bounds for Thompson Sampling in Linear Quadratic
  Control Problems}.
\newblock \emph{Proceedings of Machine Learning Research}, 80, 2018.

\bibitem[Abbasi-Yadkori et~al.(2019)Abbasi-Yadkori, Lazic, and
  Szepesv{\'a}ri]{abbasi2019model}
Yasin Abbasi-Yadkori, Nevena Lazic, and Csaba Szepesv{\'a}ri.
\newblock {Model-Free Linear Quadratic Control via Reduction to Expert
  Prediction}.
\newblock In \emph{The 22nd International Conference on Artificial Intelligence
  and Statistics}, pages 3108--3117, 2019.

\bibitem[Mania et~al.(2019)Mania, Tu, and Recht]{mania2019certainty}
Horia Mania, Stephen Tu, and Benjamin Recht.
\newblock {Certainty Equivalence is Efficient for Linear Quadratic Control}.
\newblock In \emph{Advances in Neural Information Processing Systems}, pages
  10154--10164, 2019.

\bibitem[Cohen et~al.(2019)Cohen, Koren, and Mansour]{cohen2019learning}
Alon Cohen, Tomer Koren, and Yishay Mansour.
\newblock {Learning Linear-Quadratic Regulators Efficiently with only {$\sqrt
  {T}$} Regret}.
\newblock \emph{arXiv preprint arXiv:1902.06223}, 2019.

\bibitem[Faradonbeh et~al.(2020)Faradonbeh, Tewari, and
  Michailidis]{faradonbeh2020input}
Mohamad Kazem~Shirani Faradonbeh, Ambuj Tewari, and George Michailidis.
\newblock {Input Perturbations for Adaptive Control and Learning}.
\newblock \emph{Automatica}, 117:\penalty0 108950, 2020.

\bibitem[Abeille and Lazaric(2020)]{abeille2020efficient}
Marc Abeille and Alessandro Lazaric.
\newblock {Efficient Optimistic Exploration in Linear-Quadratic Regulators via
  Lagrangian Relaxation}.
\newblock \emph{arXiv preprint arXiv:2007.06482}, 2020.

\bibitem[Jedra and Proutiere(2021)]{jedra2021minimal}
Yassir Jedra and Alexandre Proutiere.
\newblock Minimal expected regret in linear quadratic control.
\newblock \emph{arXiv preprint arXiv:2109.14429}, 2021.

\bibitem[Tsiamis and Pappas(2021)]{tsiamis2021linear}
Anastasios Tsiamis and George~J Pappas.
\newblock Linear systems can be hard to learn.
\newblock \emph{arXiv preprint arXiv:2104.01120}, 2021.

\bibitem[Tsiamis et~al.(2022)Tsiamis, Ziemann, Morari, Matni, and
  Pappas]{tsiamis2022learning}
Anastasios Tsiamis, Ingvar Ziemann, Manfred Morari, Nikolai Matni, and George~J
  Pappas.
\newblock Learning to control linear systems can be hard.
\newblock In \emph{Conference on Learning Theory}, pages 3820--3857. PMLR,
  2022.

\bibitem[Simchowitz et~al.(2020)Simchowitz, Singh, and
  Hazan]{simchowitz2020improper}
Max Simchowitz, Karan Singh, and Elad Hazan.
\newblock Improper learning for non-stochastic control.
\newblock In \emph{Conference on Learning Theory}, pages 3320--3436. PMLR,
  2020.

\bibitem[Lale et~al.(2020)Lale, Azizzadenesheli, Hassibi, and
  Anandkumar]{lale2020logarithmic}
Sahin Lale, Kamyar Azizzadenesheli, Babak Hassibi, and Anima Anandkumar.
\newblock {Logarithmic Regret Bound in Partially Observable Linear Dynamical
  Systems}.
\newblock \emph{arXiv preprint arXiv:2003.11227}, 2020.

\bibitem[Nemirovski and Tsypkin(1984)]{nemirovski1984optimal}
Arkadi~S Nemirovski and Yakov~Zalmanovich Tsypkin.
\newblock On optimal algorithms of adaptive control.
\newblock \emph{Avtomatika i Telemekhanika}, 1984.

\bibitem[Raginsky(2010)]{raginsky2010divergence}
Maxim Raginsky.
\newblock Divergence-based characterization of fundamental limitations of
  adaptive dynamical systems.
\newblock In \emph{2010 48th Annual Allerton Conference on Communication,
  Control, and Computing (Allerton)}, pages 107--114. IEEE, 2010.

\bibitem[Rantzer(2018)]{rantzer2018adaptive}
Anders Rantzer.
\newblock Concentration bounds for single parameter adaptive control.
\newblock In \emph{2018 Annual American Control Conference (ACC)}, pages
  1862--1866, 2018.

\bibitem[Cassel et~al.(2020)Cassel, Cohen, and Koren]{cassel2020logarithmic}
Asaf Cassel, Alon Cohen, and Tomer Koren.
\newblock {Logarithmic Regret for Learning Linear Quadratic Regulators
  Efficiently}.
\newblock \emph{arXiv preprint arXiv:2002.08095}, 2020.

\bibitem[Ziemann and Sandberg(2020)]{ziemann2020phase}
Ingvar Ziemann and Henrik Sandberg.
\newblock {On a Phase Transition of Regret in Linear Quadratic Control: The
  Memoryless Case}.
\newblock \emph{IEEE Control Systems Letters}, 5\penalty0 (2):\penalty0
  695--700, 2020.

\bibitem[Gevers and Ljung(1986)]{gevers1986optimal}
Michel Gevers and Lennart Ljung.
\newblock {Optimal Experiment Designs with Respect to the Intended Model
  Application}.
\newblock \emph{Automatica}, 22\penalty0 (5):\penalty0 543--554, 1986.

\bibitem[Lin et~al.(1985)Lin, Kumar, and Seidman]{lin1985will}
Woei Lin, PR~Kumar, and TI~Seidman.
\newblock {Will the Self-Tuning Approach Work for General Cost Criteria?}
\newblock \emph{Systems \& control letters}, 6\penalty0 (2):\penalty0 77--85,
  1985.

\bibitem[Polderman(1986)]{polderman1986necessity}
Jan~Willem Polderman.
\newblock {On the Necessity of Identifying the True Parameter in Adaptive LQ
  Control}.
\newblock \emph{Systems \& control letters}, 8\penalty0 (2):\penalty0 87--91,
  1986.

\bibitem[Hjalmarsson et~al.(1996)Hjalmarsson, Gevers, and
  De~Bruyne]{hjalmarsson1996model}
H{\aa}kan Hjalmarsson, Michel Gevers, and Franky De~Bruyne.
\newblock {For Model-Based Control Design, Closed-loop Identification Gives
  Better Performance}.
\newblock \emph{Automatica}, 32\penalty0 (12):\penalty0 1659--1673, 1996.

\bibitem[Pukelsheim(2006)]{pukelsheim2006optimal}
Friedrich Pukelsheim.
\newblock \emph{{Optimal Design of Experiments}}.
\newblock SIAM, 2006.

\bibitem[Rothenberg(1971)]{rothenberg1971identification}
Thomas~J Rothenberg.
\newblock {Identification in Parametric Models}.
\newblock \emph{Econometrica: Journal of the Econometric Society}, pages
  577--591, 1971.

\bibitem[Goodrich and Caines(1979)]{goodrich1979necessary}
R~Goodrich and P~Caines.
\newblock {Necessary and Sufficient Conditions for Local Second-Order
  Identifiability}.
\newblock \emph{IEEE Transactions on Automatic Control}, 24\penalty0
  (1):\penalty0 125--127, 1979.

\bibitem[Stoica and S{\"o}derstr{\"o}m(1982)]{stoica1982non}
Petre Stoica and Torsten S{\"o}derstr{\"o}m.
\newblock {On Non-Singular Information Matrices and Local Identifiability}.
\newblock \emph{International Journal of Control}, 36\penalty0 (2):\penalty0
  323--329, 1982.

\bibitem[Stoica and Marzetta(2001)]{stoica2001parameter}
Petre Stoica and Thomas~L Marzetta.
\newblock {Parameter Estimation Problems with Singular Information Matrices}.
\newblock \emph{IEEE Transactions on Signal Processing}, 49\penalty0
  (1):\penalty0 87--90, 2001.

\bibitem[van Trees(2004)]{van2004detection}
Harry~L van Trees.
\newblock \emph{{Detection, Estimation, and Modulation Theory, Part I:
  Detection, Estimation, and Linear Modulation Theory}}.
\newblock John Wiley \& Sons, 2004.

\bibitem[Bobrovsky et~al.(1987)Bobrovsky, Mayer-Wolf, and
  Zakai]{bobrovsky1987some}
Ben-Zion Bobrovsky, E~Mayer-Wolf, and M~Zakai.
\newblock {Some Classes of Global Cram{\'e}r-Rao Bounds}.
\newblock \emph{The Annals of Statistics}, pages 1421--1438, 1987.

\bibitem[Gill and Levit(1995)]{gill1995applications}
Richard~D Gill and Boris~Y Levit.
\newblock Applications of the van trees inequality: a bayesian cram{\'e}r-rao
  bound.
\newblock \emph{Bernoulli}, pages 59--79, 1995.

\bibitem[van~der Vaart(2000)]{van2000asymptotic}
Aad~W van~der Vaart.
\newblock \emph{{Asymptotic Statistics}}.
\newblock Cambridge university press, 2000.

\bibitem[Tsybakov(2008)]{tsybakov2008introduction}
Alexandre~B Tsybakov.
\newblock \emph{{Introduction to Nonparametric Estimation}}.
\newblock Springer Science \& Business Media, 2008.

\bibitem[Ibragimov and Has'minskii(2013)]{ibragimov2013statistical}
Il'dar~Abdulovich Ibragimov and Rafail~Zalmanovich Has'minskii.
\newblock \emph{{Statistical Estimation: Asymptotic Theory}}, volume~16.
\newblock Springer Science {\&} Business Media, 2013.

\bibitem[Faradonbeh et~al.(2018)Faradonbeh, Tewari, and
  Michailidis]{faradonbeh2018finite}
Mohamad Kazem~Shirani Faradonbeh, Ambuj Tewari, and George Michailidis.
\newblock Finite time identification in unstable linear systems.
\newblock \emph{Automatica}, 96:\penalty0 342--353, 2018.

\bibitem[Simchowitz et~al.(2018)Simchowitz, Mania, Tu, Jordan, and
  Recht]{pmlr-v75-simchowitz18a}
Max Simchowitz, Horia Mania, Stephen Tu, Michael~I. Jordan, and Benjamin Recht.
\newblock {Learning Without Mixing: Towards A Sharp Analysis of Linear System
  Identification}.
\newblock In Sébastien Bubeck, Vianney Perchet, and Philippe Rigollet,
  editors, \emph{Proceedings of Machine Learning Research}, volume~75, pages
  439--473. PMLR, 06--09 Jul 2018.

\bibitem[Sarkar and Rakhlin(2019)]{sarkar2019near}
Tuhin Sarkar and Alexander Rakhlin.
\newblock {Near Optimal Finite Time Identification of Arbitrary Linear
  Dynamical Systems}.
\newblock In \emph{International Conference on Machine Learning}, pages
  5610--5618, 2019.

\bibitem[Jedra and Proutiere(2020)]{jedra2020finite}
Yassir Jedra and Alexandre Proutiere.
\newblock Finite-time identification of stable linear systems optimality of the
  least-squares estimator.
\newblock In \emph{2020 59th IEEE Conference on Decision and Control (CDC)},
  pages 996--1001. IEEE, 2020.

\bibitem[Wagenmaker et~al.(2021)Wagenmaker, Simchowitz, and
  Jamieson]{wagenmaker2021task}
Andrew Wagenmaker, Max Simchowitz, and Kevin Jamieson.
\newblock Task-optimal exploration in linear dynamical systems.
\newblock \emph{arXiv preprint arXiv:2102.05214}, 2021.

\bibitem[S{\"o}derstr{\"o}m(2002)]{soderstrom2002discrete}
Torsten S{\"o}derstr{\"o}m.
\newblock \emph{{Discrete-Time Stochastic systems: Estimation and Control}}.
\newblock Springer Science \& Business Media, 2002.

\bibitem[Magnus and Neudecker(2019)]{magnus2019matrix}
Jan~R Magnus and Heinz Neudecker.
\newblock \emph{{Matrix Differential Calculus with Applications in Statistics
  and Econometrics}}.
\newblock John Wiley \& Sons, 2019.

\bibitem[Vershynin(2018)]{vershynin2018}
Roman Vershynin.
\newblock \emph{High-Dimensional Probability: An Introduction with Applications
  in Data Science}.
\newblock Cambridge University Press, 2018.

\bibitem[Miller(1974)]{miller1974complex}
Kenneth~S Miller.
\newblock \emph{Complex stochastic processes: an introduction to theory and
  application}.
\newblock Addison Wesley Publishing Company, 1974.

\end{thebibliography}

\appendix

\section{Proofs for the State Feedback Setting}

\subsection{Proof of \Cref{thm:sfregretlb}}
Throughout the proof we use $\e= T^{-\alpha}$ and set $T$ sufficiently large such that the quantities $P(\cdot)$ and $K(\cdot)$ are continuous over $B(\theta,\e)$, which is guaranteed by our burn-in condition \cref{eq:burninsfregretlb} combined with Proposition 1 of \cite{mania2019certainty}. Moreover, we may assume that $ \sup_{\theta' \in B(\theta,\e)}\mathsf{R}_T^\pi(\theta) \leq C \sqrt{T}$ for a constant $C>0$ to be determined later.

Before we proceed with the main argument of the proof, we also choose the matrix $W_0 \in \mathbb{R}^{d_\theta \times d}$ as $W_0 = V_0 \Pi_0$ where 
$\Pi_0\in \mathbb{R}^{\dim \mathtt{U} \times d}$ is a free variable and as before the columns of $V_0 \in \mathbb{R}^{d_\theta \times \dim \mathtt{U}}$ span $\mathtt{U}$. Note that by construction the columns of $W_0$ are linear combinations of the columns of $V_0$ and hence elements of $\mathtt{U}$. We also introduce a smooth, compactly supported prior $\mu$ on $\{\theta+W_0 v: \| v\| \leq \e \}$. In the sequel, we write $\dop_\theta$ for Jacobian in $\theta$-space and $\dop_v$ for Jacobian in $v$-space (e.g. of the composite function $\VEC K(\theta+W_0v)$). By restricting $\Pi_0$ to be norm-preserving and then optimizing over $\Pi_0$, the variational characterization of eigenvalues yields that:
\begin{equation}
\label{eq:fishermanipul}
    \begin{aligned}
&\lambda_1 \left( W_0^\T\E \I(\Theta,\pi,T)W_0+\J(\mu) \right)\\
&\leq \lambda_1 \left( W_0^\T\E \I(\Theta,\pi,T)W_0\right) +\lambda_{\max}(\J(\mu))\\
    &\leq\lambda_{1+\dim \mathtt{U}-d} \left(V_0^\T (\E \I(\Theta,\pi,T) )V_0\right)+\lambda_{\max}(\J(\mu))\\
    &\leq \frac{1}{1+\dim \mathtt{U}-d}\tr \left(V_0^\T (\E \I(\Theta,\pi,T) )V_0\right)+\lambda_{\max}(\J(\mu))\\
    &\leq \frac{L R_{T}^\pi(\theta)+(1+\dim \mathtt{U}-d)\lambda_{\max}(\J(\mu))}{1+\dim \mathtt{U}-d}\\
    &\leq \frac{L C \sqrt{T}+(1+\dim \mathtt{U}-d)\lambda_{\max}(\J(\mu))}{1+\dim \mathtt{U}-d}.
    \end{aligned}
\end{equation}

The proof now proceeds by invoking \Cref{lem:regretlbrelaxed} and \eqref{eq:regretlbrelaxedSF} to relax the supremum in the theorem statement to an expectation, and in particular to find that:
\begin{align*}
  &\inf_\pi \sup_{\theta' \in B(\theta,\e)}\mathsf{R}_T^\pi(\theta)\\
    &\geq\sum_{t=0}^{T-1} \E_{\Theta \sim \mu}\E_{\Theta}^\pi \tr \Big[N_\star  \left( \E[K(\Theta)|\mathcal{Y}_{T-1}]-K(\Theta)\right) \left(X_tX_t^\T \right) \left( \E[K(\Theta)|\mathcal{Y}_{T-1}]-K(\Theta) \right)^\T \Big].
\end{align*}
Introduce further a positive semi-definite matrix $\Psi$ and the event 
\begin{align*}
    \mathcal{E}\triangleq \left\{ \sum_{t=0}^{T-1} X_tX_t^\T \succeq \Psi T \right\}.
\end{align*}
By definition of $\mathcal{E}$:
\begin{equation}
\begin{aligned}
\label{eq:calcinsfproof_1}
   & \inf_\pi \sup_{\theta' \in B(\theta,\e)}\mathsf{R}_T^\pi(\theta)\\
    &\geq\sum_{t=0}^{T-1} \E_{\Theta \sim \mu}\E_{\Theta}^\pi \tr \Big[N_\star  \left( \E[K(\Theta)|\mathcal{Y}_{T-1}]-K(\Theta)\right) \left(X_tX_t^\T \right) \left( \E[K(\Theta)|\mathcal{Y}_{T-1}]-K(\Theta) \right)^\T \Big]\\
    &\geq \E_{\Theta \sim \mu}\E_{\Theta}^\pi \tr \Big[N_\star  \left( \E[K(\Theta)|\mathcal{Y}_{T-1}]-K(\Theta)\right) \left( \sum_{t=0}^{T-1} X_tX_t^\T \right) \left( \E[K(\Theta)|\mathcal{Y}_{T-1}]-K(\Theta) \right)^\T \Big]\\
    &\geq T \E_{\Theta \sim \mu}\E_{\Theta}^\pi \tr \Big[N_\star  \left( \E[K(\Theta)|\mathcal{Y}_{T-1}]-K(\Theta)\right) \mathbf{1}_{\mathcal{E}} \Psi \left( \E[K(\Theta)|\mathcal{Y}_{T-1}]-K(\Theta) \right)^\T \Big]\\
    &\geq T \E_{\Theta \sim \mu}\E_{\Theta}^\pi  \left\| \VEC  \left( \E[\sqrt{N_\star}K(\Theta)\sqrt{\Psi}|\mathcal{Y}_{T-1}]-\sqrt{N_\star}K(\Theta)\sqrt{\Psi}\right) \mathbf{1}_{\mathcal{E}} \right\|_2^2.
\end{aligned}
\end{equation}
We now continue by writing the norm (inner product) in \eqref{eq:calcinsfproof_1} as the trace of an outer product:
\begin{equation}
\begin{aligned}\label{eq:calcinsfproof_2}
   \eqref{eq:calcinsfproof_1}  &=  T  \E\tr \Bigg( \VEC \left( \E[\sqrt{N_\star}K(\Theta)\sqrt{\Psi}|\mathcal{Y}_{T-1}]-\sqrt{N_\star}K(\Theta)\sqrt{\Psi}\right)\\
    &\times \VEC \left( \E[\sqrt{N_\star}K(\Theta)\sqrt{\Psi}|\mathcal{Y}_{T-1}]-\sqrt{N_\star}K(\Theta)\sqrt{\Psi}\right)^\T  \mathbf{1}_{\mathcal{E}}\Bigg)\\
    &=
    T \E \tr \left( (\Psi \otimes N_\star) \left(\VEC \left(\E[K(\Theta)|\mathcal{Y}_{T-1}]-\VEC K(\Theta) \right)\left(\E[K(\Theta)|\mathcal{Y}_{T-1}]-\VEC K(\Theta) \right)^\T  \right) \mathbf{1}_{\mathcal{E}}\right)
\end{aligned}
\end{equation}
where the last line uses standard identities relating vectorization and Kronecker products.

Now, since $\E[K(\Theta)|\mathcal{Y}_{T-1}]$ is an estimator of $K(\Theta)$, we may apply \Cref{thm:vtineq}:
\begin{equation}
\label{eq:calc:in:sfproof}
    \begin{aligned}
 \eqref{eq:calcinsfproof_2} &\geq T \tr \Bigg((\Psi \otimes N_\star) \E \left[ (\dop_\theta K(\Theta) W_0  \mathbf{1}_{\mathcal{E}})\right] \\
 &
 \times \left( W_0^\T\E \I(\Theta,\pi,T)W_0+\J(\mu)\right)^{-1}  W_0^\T \E \left[ (\dop_\theta K(\Theta) \mathbf{1}_{\mathcal{E}})^\T\right]   \Bigg).
    \end{aligned}
\end{equation}
Above, \eqref{eq:calc:in:sfproof} uses the fact  that $\Theta$ is a $W_0$-affine translate of $\theta$, hence the Jacobian transforms as $\dop_v \VEC K(V) = \dop_\theta \VEC K(\Theta) W_0$ where $V$ is any random variable such that $\Theta = \theta +W_0 V$. Similar reasoning yields that the information matrix in $\theta$-space can be written as $W_0^\T\E \I(\Theta,\pi,T)W_0$.

Hence, if we combine \eqref{eq:calc:in:sfproof} with \eqref{eq:fishermanipul} we find that:
\begin{equation}
    \begin{aligned}\label{eq:fisherisbddinsfproof}
    &\inf_\pi \sup_{\theta' \in B(\theta,\e)}\mathsf{R}_T^\pi(\theta)\\
    &\geq  \frac{T(1+\dim \mathtt{U}-d)}{L C \sqrt{T}+(1+\dim \mathtt{U}-d)\lambda_{\max}(\J(\mu))}   \mathbf{P}^2(\mathcal{E}) \inf_{\theta',\tilde\theta \in B(\theta,\e)}\tr \Bigg((\Psi\otimes N_\star)(\dop_\theta \VEC(K(\theta'))\\
&\times W_0W_0^\T (\dop_\theta \VEC(K(\theta'))^\T\Bigg).
    \end{aligned}
\end{equation}

To finish the proof, we need to control the event $\mathcal{E}$. We will show that if  for some (universal) polynomial function $\mathsf{poly}$
\begin{align}\label{eq:omitteddisplay}
    T\geq \sup_{\theta'\in B(\theta,\e)}\mathsf{poly}\left(9^{\dx},\opnorm{B},\opnorm{A},\opnorm{Q^{-1}}\opnorm{R},\opnorm{R^{-1}},\opnorm{B},\opnorm{P},\opnorm{K},\opnorm{\Sigma_W},C \right)
\end{align}
where we have omitted dependency on $\theta'$ in the arguments of $\mathsf{poly}$ in \eqref{eq:omitteddisplay}. We show that $\mathbf{P}(\mathcal{E})\geq 1/2$---in fact we establish the stronger statement that $\inf_{\theta}\mathbf{P}_{\theta}(\mathcal{E})\geq 1/2$---with any $\Psi$  satisfying
\begin{align*}
    \Psi \preceq\sum_{j=0}^{T^{1/16}}(A(\theta')+B(\theta')K(\theta'))^j\Bigg[ \Sigma_W  -I_{\dx}T^{-1/8}\Bigg]((A(\theta')+B(\theta')K(\theta'))^\T)^j 
\end{align*}
 for all $\theta' \in B(\theta,\e)$ in \Cref{sec:boundPEN}. Taking this for granted for now, we now choose
\begin{align*}
    C =   \sqrt{\frac{(1+\dim \mathtt{U}-d)}{ 8 L  }}\sqrt{ \inf_{\theta',\tilde\theta \in B(\theta,\e)}\tr \Bigg((\Psi\otimes N_\star)(\dop_\theta \VEC(K(\theta'))
 W_0W_0^\T (\dop_\theta \VEC(K(\theta'))^\T\Bigg)}.
\end{align*}
Hence with this choice of $C$ and as long as $\sqrt{T} \geq\frac{(1+\dim \mathtt{U}-d)\lambda_{\max}(\J(\mu))}{LC}$ we are guaranteed that 
\begin{equation}
    \begin{aligned}\label{eq:fisherisbddinsfproof3}
    &\inf_\pi \sup_{\theta' \in B(\theta,\e)}\mathsf{R}_T^\pi(\theta)\\
    &\geq  \sqrt{T} \sqrt{\frac{1+\dim \mathtt{U}-d}{8L}}\sqrt{ \inf_{\theta',\tilde\theta \in B(\theta,\e)}\tr \Bigg((\Psi\otimes N_\star)(\dop_\theta \VEC(K(\theta'))
 W_0W_0^\T (\dop_\theta \VEC(K(\theta'))^\T\Bigg)}.
    \end{aligned}
\end{equation}
This finishes the proof, since if $\sup_{\theta' \in B(\theta,\e)}\mathsf{R}_T^\pi(\theta) \leq C\sqrt{T}$ does not hold for our choice of $C$, \eqref{eq:fisherisbddinsfproof3} holds trivially. \hfill $\blacksquare$

\subsubsection{Bounding $\mathbf{P}(\mathcal{E})$}
\label{sec:boundPEN}

Fix any $\theta' \in B(\theta,\e)$ and denote $A=A(\theta'),B=B(\theta')$, $P=P(\theta')$, $\mathsf{R}_T^\pi=\mathsf{R}_T^\pi(\theta')$ and $K=K(\theta')$ to avoid cumbersome notation.\footnote{This convention applies to \Cref{sec:boundPEN} alone.} The evolution of $X_tX_t^\T$ under $\theta'$ is given by:
\begin{equation}
\begin{aligned}\label{eq:xtxttopequation}
    X_tX_t^\T &= (AX_{t-1}+BU_{t-1}+W_{t-1})(AX_{t-1}+BU_{t-1}+W_{t-1})^\T\\
              &= ((A+BK)X_{t-1}+B(U_{t-1}-KX_{t-1})+W_{t-1})\\
              &\times((A+BK)X_{t-1}+B(U_{t-1}-KX_{t-1})+W_{t-1})^\T\\
              &=(A+BK)X_{t-1}X_{t-1}^\T(A+BK)^\T\\
              &+(B(U_{t-1}-KX_{t-1})+W_{t-1})(B(U_{t-1}-KX_{t-1})+W_{t-1})^\T \\
              &+\sym (A+BK)X_{t-1}(B(U_{t-1}-KX_{t-1})+W_{t-1})^\T\\
              &\succeq (A+BK)X_{t-1}X_{t-1}^\T(A+BK)^\T+W_{t-1}W_{t-1}^\T\\
              &+\sym B(U_{t-1}-KX_{t-1}))(W_{t-1})^\T \\
              &+\sym (A+BK)X_{t-1}(B(U_{t-1}-KX_{t-1})+W_{t-1})^\T \\
              &=(A+BK)X_{t-1}X_{t-1}^\T(A+BK)^\T+W_{t-1}W_{t-1}^\T\\
              &+\sym\left[ B(U_{t-1}+AX_{t-1}))\right](W_{t-1})^\T \\
              &+\sym (A+BK)X_{t-1}(B(U_{t-1}-KX_{t-1}))^\T
\end{aligned}
\end{equation}
where $\sym M = M+M^\T$ for any matrix $M$. If we iterate \eqref{eq:xtxttopequation} $m$ times, sum over $t$ and exchange the order of summation we obtain:
\begin{equation}
\begin{aligned}\label{eq:xtxttopequation2}
   \sum_{t=0}^{T-1} X_tX_t^\T &=\sum_{j=0}^{m}(A+BK)^j\Bigg[\sum_{t=0}^{T-1}W_{t-j}W_{t-j}^\T\Bigg]((A+BK)^\T)^j\\
              &+\sum_{j=0}^{m}(A+BK)^j\Bigg[\sum_{t=0}^{T-1}\sym\left[ B(U_{t-j}+AX_{t-j}))\right](W_{t-j})^\T\Bigg]((A+BK)^\T)^j \\
              &+\sum_{j=0}^{m}(A+BK)^j\Bigg[\sum_{t=0}^{T-1}\sym (A+BK)X_{t-j}(B(U_{t-j}-KX_{t-j}))^\T\Bigg]((A+BK)^\T)^j.
\end{aligned}
\end{equation}
where quantities with negative indices are taken to be zero.

The strategy is now as follows. We wish to show that the first term in \eqref{eq:xtxttopequation2} is dominant and that it approximates the covariance of the optimal policy. The second term will be bounded by a martingale argument and the third term will be bounded using the hypothesis that $\mathsf{R}_T^\pi \leq C\sqrt{T}$.  To this end, we now introduce the following three events:
\begin{equation}
\begin{aligned}\label{eq:3eventsforsfregret}
    \mathcal{E}_{1} &\triangleq \left\{\exists j : \sum_{t=0}^{T-1}W_{t-j}W_{t-j}^\T \nsucc \Sigma_W- I_{\dx}\opnorm{\Sigma_W} \geq  T^{3/4} \right\},\\
    \mathcal{E}_{2} &\triangleq \left\{\exists j : \bigopnorm{\sum_{t=0}^{T-1}\sym\left[ B(U_{t-j}+AX_{t-j}))\right](W_{t-j})^\T} \geq  T^{3/4} \right\},\\
    \mathcal{E}_{3} &\triangleq \left\{\exists j : \bigopnorm{\sum_{t=0}^{T-1}\sym (A+BK)X_{t-1}(B(U_{t-j}-KX_{t-j}))^\T} \geq T^{7/8}\right\}.
\end{aligned}
\end{equation}
Below we provide bounds on each of these failure-events, which we will now combine to arrive at a high probability lower bound on \eqref{eq:xtxttopequation2}.

By combining the bounds \eqref{eq:boundevent1},\eqref{eq:boundevent2} and \eqref{eq:boundevent3} with the choice $m=T^{1/16}$ we find that there exist a polynomial function $\mathsf{poly}$ such that as long
\begin{align*}
    T\geq \mathsf{poly}\left(9^{\dx},\opnorm{B},\opnorm{A},\opnorm{Q^{-1}}\opnorm{R},\opnorm{R^{-1}},\opnorm{B},\opnorm{P},\opnorm{K},\opnorm{\Sigma_W},C \right)
\end{align*}
we have
\begin{align*}
    \mathbf{P}\left(\sum_{t=0}^{T-1}X_tX_t^\T \succeq\sum_{j=0}^{T^{1/16}}(A+BK)^j\Bigg[ \Sigma_W T -I_{\dx}(\opnorm{\Sigma_W}T^{3/4}+T^{3/4}+T^{7/8})\Bigg]((A+BK)^\T)^j  \right) \geq 1/2.
\end{align*}

\paragraph{Bounding $\mathcal{E}_{1}$}
Fix $\eta>0$ and $s\geq 1$. We invoke \citet[Exercise 4.7.3]{vershynin2018} to obtain that
\begin{align*}
    \mathbf{P} \left(\sum_{t=0}^{T-1} W_{t-j}W_{t-j}^\T \nsucc \Sigma_W T -\eta I_{\dx}\opnorm{\Sigma_W} T   \right)\leq 2\exp(-s^2\dx)
\end{align*}
for every $s$ satsifying $T \geq c(s/\eta)^2\dx$ for a universal constant $c>0$. Set $\eta=T^{-1/4}$, then for a second universal positive constant $c'$:
\begin{align}\label{eq:boundevent1}
    \mathbf{P} \left(\sum_{t=0}^{T-1} W_{t-j}W_{t-j}^\T \nsucc \Sigma_W T - I_{\dx}\opnorm{\Sigma_W} T^{3/4}   \right)\leq 2\exp\left(-c'T^{1/4}\right).
\end{align}

\paragraph{Bounding $\mathcal{E}_{2}$}
Fix $\eta=1/4$ and let $(v_1,v_1',\dots,v_N,v_N')$ be an optimal $\eta$-cover of $\mathbb{S}^{\dx-1} \times \mathbb{S}^{\dx-1}$. By Chebyshev's inequality:  
\begin{align*}
    & \mathbf{P}\left(\sum_{t=0}^{T-1}2v_{i}^\T\left[ B(U_{t-j}+AX_{t-j}))\right](W_{t-j})^\T v_{i}'\geq T^{3/4}\right)\\
    &\leq \frac{\E \left(\sum_{t=0}^{T-1}2v_{i}^\T\left[ B(U_{t-j}+AX_{t-j}))\right](W_{t-j})^\T v_{i}'\right)^2}{T^{3/2}}\\
    &\leq \frac{2\sum_{t=0}^{T-1} \opnorm{B}^2 \E \left( \left[ (U_{t-j}+AX_{t-j}))\right]^\T\left[ (U_{t-j}+AX_{t-j}))\right] \right)\tr \Sigma_W }{T^{3/2}}\\
    &\leq \frac{4 \opnorm{B}^2(\opnorm{A}^2\opnorm{Q^{-1}}\vee \opnorm{R^{-1}} ) [V_{T}^\star+R_{T}^\pi] \tr \Sigma_W }{T^{3/2}}\\
    &\leq  \frac{4 \opnorm{B}^2(\opnorm{\opnorm{A}^2Q^{-1}}\vee \opnorm{R^{-1}} ) [\tr(P\Sigma_W)T+C\sqrt{T}] \tr \Sigma_W }{T^{3/2}}\\
    &\leq \frac{4 \opnorm{B}^2(\opnorm{A}^2\opnorm{Q^{-1}}\vee \opnorm{R^{-1}} ) [\tr(P\Sigma_W)+C] \tr \Sigma_W }{T^{1/2}}.
\end{align*}

By \citet[Exercise 4.4.3]{vershynin2018}, we have
\begin{multline*}
    \bigopnorm{\sum_{t=0}^{T-1}\sym\left[ B(U_{t-j}+AX_{t-j}))\right](W_{t-j})^\T} \\ \leq \frac{1}{1-2\eta} \sup_{v_i,v_i'}\left\|\sum_{t=0}^{T-1}2v_i^\T\left[ B(U_{t-j}+AX_{t-j}))\right](W_{t-j})^\T v_i' \right\|_2.
\end{multline*}
Hence by a union bound it follows that (inserting $\eta=1/4$): 
\begin{multline}\label{eq:boundevent2}
    \mathbf{P} \left\{\exists j : \bigopnorm{\sum_{t=0}^{T-1}\sym\left[ B(U_{t-j}+AX_{t-j}))\right](W_{t-j})^\T} \geq  2 T^{3/4} \right\}\\
    \leq\frac{4 m 9^{2\dx} \opnorm{B}^2(\opnorm{A}^2\opnorm{Q^{-1}}\vee \opnorm{R^{-1}} ) [\tr(P\Sigma_W)+C] \tr \Sigma_W }{T^{1/2}}.
 \end{multline}
\paragraph{Bounding $\mathcal{E}_{3}$}
Fix $\eta=1/4$ and let $(v_1,v_1',\dots,v_N,v_N')$ be an optimal $\eta$-cover of $\mathbb{S}^{\dx-1} \times \mathbb{S}^{\dx-1}$. Let $\mathcal{E}_{3,i}$ be the event that 
\begin{align*}
    \sum_{t=0}^{T-1}2v_i^\T (A+BK)X_{t-1}(B(U_{t-j}-KX_{t-j}))^\T v_{i}'
\end{align*}
is nonnegative. By Markov's inequality and Cauchy-Schwarz:
\begin{align*}
    & \mathbf{P}\left( \sum_{t=0}^{T-1}2v_i^\T (A+BK)X_{t-1}(B(U_{t-j}-KX_{t-j}))^\T v_{i}'\geq T^{7/8}\right)\\
    &=\mathbf{P}\left( \sum_{t=0}^{T-1}2v_i^\T (A+BK)X_{t-1}(B(U_{t-j}-KX_{t-j}))^\T v_{i}' \mathbf{1}_{\mathcal{E}_{3,i}}\geq T^{7/8}\right)\\
    &\leq \frac{\E \left(\sum_{t=0}^{T-1}2v_i^\T (A+BK)X_{t-1}(B(U_{t-j}-KX_{t-j}))^\T v_{i}'\mathbf{1}_{\mathcal{E}_{3,i}} \right)}{T^{7/8}}\\
    &= T \frac{\E \left(\frac{1}{T}\sum_{t=0}^{T-1}2v_i^\T (A+BK)X_{t-1}(B(U_{t-j}-KX_{t-j}))^\T v_{i}'\mathbf{1}_{\mathcal{E}_{3,i}} \right)}{T^{7/8}}\\
    &\leq 2T \frac{\sqrt{\E \left(\frac{1}{T}\sum_{t=0}^{T-1}\|(A+BK)X_{t-1}\|_2^2 \right) }\sqrt{\E \left(\frac{1}{T}\sum_{t=0}^{T-1}\|(B(U_{t-j}-KX_{t-j})) \|_2^2 \right) } }{T^{7/8}}\\
    &\leq 2 \frac{\sqrt{ \opnorm{A+BK}^2 \opnorm{Q^{-1}}^2 (V_T^\star+\mathsf{R}_T^\pi) }\sqrt{\opnorm{B}^2\opnorm{(B^\top P B+R)^{-1}} \mathsf{R}_T^\pi } }{T^{7/8}}\\
    &\leq 2 \frac{\sqrt{ \opnorm{A+BK}^2 \opnorm{Q^{-1}}^2 (T \tr P\Sigma_W +C\sqrt{T}) }\sqrt{\opnorm{B}^2\opnorm{(B^\top P B+R)^{-1}} C\sqrt{T} } }{T^{7/8}}\\
    &\leq 2 \frac{\sqrt{ \opnorm{A+BK}^2 \opnorm{Q^{-1}}^2 ( \tr P\Sigma_W +C) }\sqrt{\opnorm{B}^2\opnorm{(B^\top P B+R)^{-1}} C} }{T^{1/8}}.
\end{align*}
Thus by another union bound:
\begin{multline}\label{eq:boundevent3}
    \mathbf{P}\left\{\exists j : \bigopnorm{\sum_{t=0}^{T-1}\sym (A+BK)X_{t-1}(B(U_{t-j}-KX_{t-j}))^\T} \geq 2 T^{7/8}\right\} \ \\
   \leq 2m 9^{2\dx} \frac{\sqrt{ \opnorm{A+BK}^2 \opnorm{Q^{-1}}^2 ( \tr P\Sigma_W +C) }\sqrt{\opnorm{B}^2\opnorm{(B^\top P B+R)^{-1}} C} }{T^{1/8}}.
\end{multline}

\subsection{Proof of Corollary~\ref{statefeedbackcorr}}
We consider the coordinates 
\begin{equation}
\begin{aligned}
\label{eq:corrconsab}
\VEC A(\theta) &= \VEC A-\VEC [ (\VEC^{-1} \theta) K]\\
\VEC B(\theta) &= \VEC B +\theta
\end{aligned}
\end{equation}

By virtue of Proposition~\ref{simchoprop} we know that the instance is $\e$-locally uninformative (for every $\e >0$) with $d_\theta=\dim \mathtt{U} = \dx\du$. Hence for every $d \leq \dx \du$, we have  by \Cref{thm:sfregretlb} that there exists $W_0$ with $d$-many independent columns that satisfy $\spn W_0 \subset \mathtt{U}$. Note also that for $d = \dx \du$ we have $W_0 W_0^\T = I_{d_\theta}$.

We need to analyze the term:
\begin{align*}
\tr\Bigg( [\Sigma^\star_X(\theta) \otimes    (B^\top(\theta) P(\theta)B( \theta)+R)] \left(\dop_\theta\VEC K( \theta) \right)  W_0W_0^\T\left(\dop_\theta \VEC K( \theta) \right)^\top\Bigg)\\
\end{align*}
For this parameterization, appealing to Proposition~\ref{simchoprop} and  by virtue of \eqref{ch6:simchoder} (see also \citet{simchowitz2020naive}), we have an explicit expression for $\dop_\theta\VEC K( \theta)   $. Namely, 
\begin{align*}
\dop_\theta\VEC K( \theta) =- \left( (A(\theta)+B(\theta)K(\theta))^\top P(\theta) \right) \otimes \left(B^\top(\theta) P(\theta) B(\theta)+R\right)^{-1} .
\end{align*}
Straightforward calculations involving the trace cyclic property and the Lyapunov identity $(A+BK_\star) \Sigma_X^\star(A+BK_\star)^\T = \Sigma_X^\star -I_{d_{\mathsf{X}}}  $ now yield
\begin{multline}\label{eq:corr:fcalc}
    \tr\Bigg( [\Sigma^\star_X(\theta) \otimes    (B^\top(\theta) P(\theta)B( \theta)+R)] \left(\dop_\theta\VEC K( \theta) \right)  W_0W_0^\T\left(\dop_\theta \VEC K( \theta) \right)^\top\Bigg)\\
    =  \tr\Bigg( \left[\left(P(\theta)[\Sigma^\star_X(\theta)-I_{\dx}]P(\theta)\right) \otimes    \left(B^\top(\theta) P(\theta)B( \theta)+R\right)^{-1}\right] W_0W_0^\T\Bigg).
\end{multline}

Using Lemma~\ref{statefeedbackinfocomp} we may choose \begin{align*}
L=\tr(\Sigma^{-1}_w)\left( \sup_{\bar\theta\in B(\theta,\e)} \| \dop_\theta[A(\bar \theta) \:  B(\bar \theta)] \|_\infty^2\right) \|(B^\top P(\theta)B+R)^{-1}\|_{\mathsf{op}}
\end{align*}
with $\e$ arbitrary.  Using (\ref{eq:corrconsab}), we have $\|\dop_\theta[A(\bar \theta) \:  B(\bar \theta)] \|_{\mathsf{op}}^2 \leq 1\vee \| K(\theta)K^\top(\theta)\|_{\mathsf{op}}$. Next, note that
\begin{align*}
 \|(B^\top P(\theta)B+R)^{-1}\|_{\mathsf{op}} = \frac{1}{\sigma_{\min}(B^\top(\theta)  P(\theta)B(\theta)+R)},
\end{align*}
that $\tr \Sigma^{-1}_w \leq \sigma_{\max}(\Sigma^{-1}_w) \dx = \frac{\dx}{\sigma_{\min}(\Sigma_W)}$.  Hence, we may take
\begin{align}
\label{eq:Lincorrsf}
L \leq \frac{\dx (1\vee \| K(\theta)K^\top(\theta)\|_{\mathsf{op}})}{\sigma_{\min} (\Sigma_W) \times \sigma_{\min} (B^\top(\theta)  P(\theta)B(\theta)+R) }.
\end{align}
By combining \eqref{eq:corr:fcalc} with \eqref{eq:Lincorrsf} and invoking \Cref{thm:sfregretlb} we obtain:
\begin{multline*}
    \liminf_{T\to \infty} \sup_{ \substack{A' , B': \\ \| [A'-A \: B'-B] \|_{\mathsf{op}} \leq T^{-\alpha}}} \frac{ \mathsf{R}_T^\pi(A',B')}{\sqrt{T}}\\
\geq c \sqrt{
 \frac{(1+\dx\du-d)\lambda_{\min}(\Sigma_W) \tr \Bigg(\left[\left(P(\theta)[\Sigma^\star_X(\theta)-I_{\dx}]P(\theta)\right)\otimes I_{\du}\right]W_0W_0^\T \Bigg)}{\dx(1+ \opnorm{K(\theta)K^\T(\theta)})}}
\end{multline*}
for some universal positive constant $c>0$. Hence for $d=\dx\du$ we have that:
\begin{equation*}
    \liminf_{T\to \infty} \sup_{ \substack{A' , B': \\ \| [A'-A \: B'-B] \|_{\mathsf{op}} \leq T^{-\alpha}}} \frac{ \mathsf{R}_T^\pi(A',B')}{\sqrt{T}}
\geq c \sqrt{
 \frac{ \du  \lambda_{\min}(\Sigma_W) \tr \Bigg((P(\theta)[\Sigma^\star_X(\theta)-I_{\dx}]P(\theta)\Bigg)}{\dx(1+ \opnorm{K(\theta)K^\T(\theta)})}}
\end{equation*}
For $d<\dx\du$, since $W_0W_0^\T$ is an orthogonal projector onto a $d$-dimensional subspace, it also follows that:
\begin{multline*}
    \liminf_{T\to \infty} \sup_{ \substack{A' , B': \\ \| [A'-A \: B'-B] \|_{\mathsf{op}} \leq T^{-\alpha}}} \frac{ \mathsf{R}_T^\pi(A',B')}{\sqrt{T}}\\
\geq c \sqrt{
 \frac{d(1+\dx\du-d) \lambda_{\min}(\Sigma_W) \lambda_{ \min}\left(P(\theta)[\Sigma^\star_X(\theta)-I_{\dx}]P(\theta)\right) }{\dx(1+ \opnorm{K(\theta)K^\T(\theta)})}}.
\end{multline*}
The proof is completed by optimizing over $d$. \hfill $\blacksquare$

\subsection{Proof of \Cref{statefeedbackinfocomp}}
\label{sec:proof:statefeedbackinfocomp}
Fix $\theta'\in B(\theta,\e)$ and let the columns of $V_0$ span the information singular subspace $\mathtt{U}$. We now introduce a quadratic potential, $f:\mathbb{R}^{T(\dx+\du)}\to \mathbb{R}$, in terms of the dummy variables $\eta_j=(\eta_j^1,\eta_j^2) \in \mathbb{R}^{\dx+\du}$ as 
\begin{align}
\label{eq:thepotential}
f(\eta_{0:T-1}) = \tr \left[V_0^\T\left(\sum_{j=1}^{T-1}  \left( \dop_\theta[A(\theta') \:  B(\theta')]\right)^\T ( \eta_{j-1} \eta_{j-1}^\T \otimes \Sigma_{W}^{-1}(\theta)) \left( \dop_\theta[A(\theta') \:  B(\theta')]\right) \right) V_0 \right].
\end{align}
Observe that $\E_{\theta'}^\pi f(Z_{0:T-1}) =  \tr V_0^\T \I(\pi;\theta',T) V_0$. Our next observation is that the restriction of $f$ to the subspace $\eta^2_j = K(\theta)\eta^1_j, \forall j$ is identically zero by uniformativeness and invoking \Cref{prop:charpropsf} (recall that the columns of $V_0$ span $\mathtt{U}$).

Since the linear manifold $F_0 =\{\eta^2_j = K(\theta)\eta^1_j, \forall j\}\subset \mathbb{R}^{ T (\dx+\du)}$ is a global minimum for $f$, we may, for any fixed choice of $\eta^1_j,0=1,\dots,T-1$, Taylor-expand $f$ around such a point with coordinate $\eta^1$ fixed, to obtain

\begin{align*}
f(\eta_{0:T-1}) \leq \frac{1}{2} \opnorm{\nabla^2_{\eta^{2}_{0:T-1}} f}  \left \|(\eta^2_j-K(\theta)\eta^1_j)_{j=0}^{T-1} \right\|_2.
\end{align*}
where we have used the fact that $f$ is convex quadratic function with minimum $0$, attained at all points in $F_0$, so that its taylor-expansion around such a point is just a quadratic form. 

By introducing a factor $I_{\du} = (B^\T P(\theta)B+R)^{-1}(B^\T P(\theta)B+R)$ we obtain
\begin{align*}
f(\eta_{0:T-1})& \leq\frac{1}{2} \opnorm{(B^\T P(\theta)B+R)^{-1}} \| \nabla^2_{\eta^{2}_{0:T-1}} f\|_{\mathsf{op}}\\
&\times \sum_{t=0}^{T-1}  (\eta^2_j-K(\theta)\eta_j^1)^\T (B^\T P(\theta)B+R) (\eta_j^2-K(\theta)\eta_j^1)
\end{align*}
In particular, by taking expectations, we have that
\begin{align*}
\tr V_0^\T \I(\pi;\theta',T) V_0 =\frac{1}{2} \E_{\theta'}^\pi f(Z_{0:T-1})  \leq  \opnorm{(B^\T P(\theta)B+R)^{-1}} \opnorm{ \nabla^2_{\eta^{2}_{0:T-1}} f} \mathsf{R}_T^\pi(\theta)
\end{align*}
Since
\begin{align*}
\opnorm{\nabla^2_{\eta^{2}_{0:T-1}} f} \leq \opnorm{ \dop_\theta[A(\theta') \:  B(\theta')] \|}^2 \tr ( \Sigma_W^{-1})
\end{align*}
we have for any $\theta'\in B(\theta,\e)$
\begin{align*}
\tr V_0^\T \I(\pi;\theta',T) V_0 \leq \tr ( \Sigma_W^{-1}) \opnorm{\dop_\theta[A( \theta') \:  B(\theta')] }^2\opnorm{(B^\T P(\theta)B+R)^{-1}} \mathsf{R}_T^\pi(\theta)
\end{align*}
and the result follows.  \hfill $\blacksquare$

\section{Fisher Information and  Van Trees' Inequality}

This appendix collects some facts about Fisher Information and Bayesian Estimation that will be useful in the sequel. Indeed, many---if not most---information-theoretic lower bounds rely on relaxing a minimax complexity by a Bayesian complexity (i.e., weak duality). Instead of fixing the parameter $\theta$, we let $\Theta$ be a random vector taking values in $\mathbb{R}^{d_\Theta}$ and suppose that it has density $\mu$. More concretely, we will be using this in conjunction with the lower bound: 
\begin{align*}
    \inf_{\pi} \sup_{\theta} R^\pi_T(\theta) \geq \inf_{\pi} \E_{\Theta\sim\mu} R^\pi_T(\Theta).
\end{align*}

Let us now consider a more general situation in which we are given an observation $Y$, a random vector taking values in $\mathbb{R}^n$ with (conditional) density $p_\theta(\cdot)=p(\cdot|\theta)$. The following two quantities are key to measuring estimation performance of $\Theta$ from the sample $Y$ with respect to square loss:
\begin{align}
\label{eq:fisherdef}
\I_p(\theta) &=\int  \left( \frac{\nabla_\theta p(y|\theta)}{p(y|\theta)}\right)\left( \frac{\nabla_\theta p(y|\theta)}{p(y|\theta)}\right)^\T p(y|\theta) dy, \textnormal{ and} \\
\label{eq:fisherlocdef}
\J(\mu)&= \int \left( \frac{\nabla_\theta \mu(\theta)}{\mu(\theta)}\right) \left( \frac{\nabla_\theta \mu(\theta)}{\mu(\theta)}\right)^\T  \mu(\theta)  d\theta.
\end{align}
The first quantity \eqref{eq:fisherdef} can be thought of as to (locally, with respect to square loss) measure the information content of the sample $Y$ with regards to $\Theta$. The second quantity \eqref{eq:fisherlocdef} essentially measures the concentration of the prior density $\mu$. See \cite{ibragimov2013statistical} for further details about these integrals and their existence.

Before we state Van Trees' Inequality we need to establish the following Gramian analogue of the Cauchy-Schwarz inequality.

\begin{lemma}
\label{gcslemma}
Fix two random vectors  $V_1, V_2 \in \mathbb{R}^d$ and suppose that $ 0\prec \E V_2V_2^\T \prec \infty$. Then 
\begin{align}
\label{gcs}
\E V_1 V_1^\T \succeq\E V_1V_2^\T (\E V_2V_2^\T)^{-1} \E V_2 V_1^\T. 
\end{align}
\end{lemma}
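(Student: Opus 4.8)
The plan is to prove this matrix Cauchy--Schwarz inequality by a completing-the-square argument built around the optimal linear predictor of $v_1$ from $v_2$. First I would fix an arbitrary matrix $L \in \mathbb{R}^{n \times n}$ and observe that, since $(v_1 - Lv_2)(v_1 - Lv_2)^\top$ is pointwise positive semidefinite (being an outer product) and expectation preserves the positive semidefinite order, we have
\[
0 \preceq \E (v_1 - Lv_2)(v_1 - Lv_2)^\top = \E v_1 v_1^\top - L\, \E v_2 v_1^\top - \E v_1 v_2^\top L^\top + L\, \E v_2 v_2^\top\, L^\top .
\]

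The key step is then to select $L = \E v_1 v_2^\top (\E v_2 v_2^\top)^{-1}$, which is well-defined precisely because the hypothesis $0 \prec \E v_2 v_2^\top \prec \infty$ guarantees invertibility. Substituting this choice and using the symmetry of $\E v_2 v_2^\top$, each of the two cross terms and the final quadratic term collapses to the single matrix $\E v_1 v_2^\top (\E v_2 v_2^\top)^{-1} \E v_2 v_1^\top$; the two negative copies and the one positive copy combine to leave exactly
\[
0 \preceq \E v_1 v_1^\top - \E v_1 v_2^\top (\E v_2 v_2^\top)^{-1} \E v_2 v_1^\top ,
\]
which is the claimed inequality after rearranging. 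As an alternative packaging of the same computation, I could instead form the block second-moment matrix $M = \E \bigl[\begin{smallmatrix} v_1 \\ v_2 \end{smallmatrix}\bigr]\bigl[\begin{smallmatrix} v_1^\top & v_2^\top \end{smallmatrix}\bigr]$, note $M \succeq 0$ since it is an expectation of outer products, and invoke the standard fact that a symmetric block matrix with strictly positive-definite lower-right block is positive semidefinite if and only if its Schur complement is; here that Schur complement is precisely $\E v_1 v_1^\top - \E v_1 v_2^\top (\E v_2 v_2^\top)^{-1} \E v_2 v_1^\top$.

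This argument is essentially routine and presents no serious obstacle. The only point demanding a little care is the justification that $\E$ preserves the semidefinite order, so that the pointwise relation $(v_1 - Lv_2)(v_1 - Lv_2)^\top \succeq 0$ survives taking expectations: for any fixed $w \in \mathbb{R}^n$ the scalar $w^\top (v_1 - Lv_2)(v_1 - Lv_2)^\top w = \bigl(w^\top (v_1 - Lv_2)\bigr)^2 \geq 0$ has nonnegative expectation, and the implicit integrability built into $\E v_2 v_2^\top \prec \infty$ (together with finiteness of the relevant second moments of $v_1$) ensures that all the expectations appearing above exist and that the manipulations are legitimate.
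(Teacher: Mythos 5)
Your proposal is correct. Your primary argument --- completing the square with the optimal linear predictor coefficient $L = \E v_1 v_2^\top (\E v_2 v_2^\top)^{-1}$ and expanding $\E (v_1 - Lv_2)(v_1 - Lv_2)^\top \succeq 0$ --- differs in packaging from the paper's proof, which instead forms the block second-moment matrix
\begin{align*}
\begin{bmatrix}
\E v_1v_1^\top & \E v_1 v_2^\top \\
\E v_2v_1^\top &  \E v_2 v_2^\top
\end{bmatrix} \succeq 0
\end{align*}
and invokes the Schur complement criterion for positive semidefiniteness; that second route is exactly the ``alternative packaging'' you mention at the end. The two arguments are essentially the same inequality viewed two ways, since the Schur complement criterion is itself standardly proved by the very congruence/completing-the-square computation you carry out. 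What your primary route buys is self-containedness: it needs no external lemma, only that expectation preserves the semidefinite order, which you correctly justify by testing against fixed vectors $w$. What the paper's route buys is brevity, at the cost of citing the Schur complement fact as a black box. One small point worth making explicit in either version: the cross moments $\E v_1 v_2^\top$ must exist for the expansion to be meaningful, which you correctly flag as requiring finiteness of the relevant second moments of $v_1$ (the paper leaves this implicit).
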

For scalar random variabes $V_1, V_2$  this reduces to the Cauchy-Schwarz inequality in the space of square integrable random variables. 
\begin{proof}

Take two random vectors $V_1, V_2 \in \mathbb{R}^d$ and suppose that $ 0\prec \E V_2V_2^\T \prec \infty$. Observe that 
\begin{align}
\label{posdefoprod}
0\preceq
\E \begin{bmatrix}
V_1 \\ V_2
\end{bmatrix}
\begin{bmatrix}
V_1^\T & V_2^\T
\end{bmatrix}
=
\begin{bmatrix}
\E V_1V_1^\T & \E V_1 V_2^\T \\
\E V_2V_1^\T &  \E V_2 V_2^\T
\end{bmatrix}.
\end{align}
Since $\E V_2 V_2^\T \succ 0$, (\ref{posdefoprod}) implies, by the Schur complements necessary condition for positive semi-definiteness, that $\E V_1 V_1^\T -\E V_1V_2^\T (\E V_2V_2^\T)^{-1} \E V_2 V_1^\T \succeq 0$. 
\end{proof}

Let us also introduce a function $\psi : \mathbb{R}^{d_\Theta} \to \mathbb{R}^n$. The purpose is to relax the setup above slightly, in that we seek to establish lower bounds for estimating $\psi(\theta)$ instead of just $\theta$. We impose the following regularity conditions:
\begin{enumerate}
\item[B1.] $\mu \in \mathscr{C}^\infty_c(\mathbb{R}^{d_\Theta})$; the prior is smooth with compact support.
\item[B2.] $p(y|\cdot)$ is continuously differentiable on the domain of $\mu$ for almost every $y$.
\item[B3.] The score has mean zero; $ \int \left( \frac{\nabla_\theta p(y|\theta)}{p(y|\theta)}\right) p(y|\theta) dy =0$.
\item[B4.] $\J(\mu)$ is finite and $\I_p(\theta)$ is a continuous function of $\theta$ on the domain of $\mu$.
\item[B5.] $\psi$ is differentiable on the domain of $\mu$.
\end{enumerate}

The following theorem is a less general adaption from \cite{bobrovsky1987some}  which suffices for our needs.

\begin{theorem}
\label{thm:vtineq}
Fix two random variables $(Y,\Theta) \sim   p(\cdot|\cdot) \mu(\cdot)$  and suppose that B1-B5 hold. Let further $\mathcal{E}$ be a $\sigma(Y)$-measurable event. Then for any $\sigma(Y)$-measurable $\hat \psi$:
\begin{align}
\label{VTineq}
\E \left[ (\hat \psi -\psi(\Theta))(\hat \psi -\psi(\Theta))^\T \mathbf{1}_\mathcal{E}\right] \succeq  \E[ \nabla_\theta \psi(\Theta)\mathbf{1}_\mathcal{E}]^\T \left[ \E \I_p(\Theta)+\J(\mu) \right]^{-1} \E [\nabla_\theta \psi(\Theta)\mathbf{1}_\mathcal{E}].
\end{align}
\end{theorem}

\begin{proof}
We invoke \eqref{gcs} with $V_1 = (\hat \psi (Y)-\psi(\Theta))\mathbf{1}_\mathcal{E}$ and $V_2 =\nabla_\theta \log [p(Y|\Theta) \mu(\Theta)] $. We first compute
\begin{equation*}
\begin{aligned}
\E V_2 V_2^\T &= \E\left[ \nabla_\theta \log [p(y|\Theta) \mu(\Theta)] \left(\nabla_\theta \log [p(y|\Theta) \mu(\Theta)]\right)^\T\right]\\
&= \E \left(\frac{\mu(\theta)\nabla_\theta(p(y|\Theta) + p(y|\Theta) \nabla_\theta \mu(\Theta)}{p(y|\Theta)\mu(\Theta)} \right)\left(\frac{\mu(\Theta)\nabla_\theta(p(y|\Theta) + p(y|\Theta) \nabla_\theta \mu(\Theta)}{p(y|\Theta)\mu(\Theta)} \right)^\T\\
&=\E \left( \frac{\nabla_\theta p(y|\Theta)}{p(y|\Theta)}\right)\left( \frac{\nabla_\theta p(y|\Theta)}{p(y|\Theta)}\right)^\T +\E \left( \frac{\nabla_\theta \mu(\Theta)}{\mu(\Theta)}\right) \left( \frac{\nabla_\theta \mu(\Theta)}{\mu(\Theta)}\right)^\T+C_{p\mu}
\end{aligned}
\end{equation*}
where
\begin{align*}
C_{p\mu} &= \E \left( \frac{\nabla_\theta p(y|\Theta)}{p(y|\Theta)}\right)\left( \frac{\nabla_\theta \mu(\Theta)}{\mu(\Theta)}\right)^\T +\E \left( \frac{\nabla_\theta \mu(\Theta)}{\mu(\Theta)}\right)\left( \frac{\nabla_\theta p(y|\Theta)}{p(y|\Theta)}\right)^\T\\
&=0
\end{align*}
in light of B3. Hence
\begin{align}
\label{v2v2eqn}
\E V_2 V_2^\T = \E \I(\theta)+\J(\mu).
\end{align}
We still need to establish that the matrix in (\ref{v2v2eqn}) has full rank. We will see that this is a byproduct of the computation of $\E V_1 V_2^\T$. Now since $\mu\in \mathscr{C}^\infty_c(\mathbb{R}^{d_\Theta})$, we may integrate by parts:
\begin{multline}
\begin{aligned}
\label{ibp}
\E V_1 V_2^\T &=\E \left[ (\hat \psi (Y)-\psi(\Theta))\mathbf{1}_\mathcal{E}\left(\frac{\nabla_\theta p(Y,\Theta)}{p(Y,\Theta)}  \right)^\T \mathbf{1}_\mathcal{E} \right]\\
&=\int\int (\hat \psi (y)-\psi(\theta))\mathbf{1}_\mathcal{E}(y)\left(\frac{\nabla_\theta p(y,\theta)}{p(y,\theta)}  \right)^\T p(y,\theta) dy d\theta\\
&=\int\int \left[\nabla_\theta \psi(\theta)\right]^\T\mathbf{1}_\mathcal{E}(y) p(y,\theta) dy d\theta = \E [\nabla_\theta \psi(\theta)\mathbf{1}_\mathcal{E}]^\T && (\mu \in C^\infty_c(\mathbb{R}^{d_\Theta}) )
\end{aligned}
\end{multline}
where $p(y,\theta) = p(y|\theta)\mu(\theta)$.

In particular, using $\psi(\theta) = \theta$ and choosing $\mathcal{E}$ as the entire sample space yields $\E V_1V_2^\T =I$, which is sufficient to conclude that $\E V_2 V_2^\T$ has full rank. To see this, let $w\in \ker \E V_2 V_2^\T$. Then: $ \E w^\T V_2 V_2^\T w  =0$  and so $V_2^\T w =0$ almost surely. However $\E V_1V_2^\T w \neq 0$ for all $w\neq 0$. Hence $\ker \E V_2 V_2^\T =\{0\}$. The result now follows by (\ref{gcs}) combined with (\ref{v2v2eqn}) and (\ref{ibp}).
\end{proof}

\subsection{Further Properties}
We collect here a few useful properties about \eqref{eq:fisherdef} that will come in hand in the sequel. First, we will give explicit forms of the information quantities just discussed for Gaussian densities. The following lemma is a consequence of Theorem 2.1 in \cite{miller1974complex},  Chapter V.
\begin{lemma}
\label{lem:Gaussianfisher}
Let $\mu(\theta): \mathbb{R}^{d_\theta} \to \mathbb{R}^d$, and $\Sigma(\theta) : \mathbb{R}^{d_\Theta} \to \mathbb{R}^{d\times d} $, with $\Sigma(\theta)\succ 0$ and define $\gamma_\theta(x) = \frac{1}{\sqrt{(2\pi)^d |\Sigma(\theta)|}} \exp\left(-\frac{1}{2} (x-\mu(\theta))^\T\Sigma^{-1}(\theta)(x-\mu(\theta)) \right)$. Then
\begin{align}
\label{Gaussianfishereq}
\I_{\gamma}(\theta) = \underbrace{[\nabla_\theta \mu(\theta)][\Sigma(\theta)]^{-1} [\nabla_\theta \mu(\theta)]^\T}_{\I_{\gamma,\mu}(\theta)} +\underbrace{\frac{1}{2}\tr\left(\Sigma^{-1} (\del_{\theta_m}\Sigma)\Sigma^{-1}\del_{\theta_n}\Sigma\right)_{m,n}  }_{ \I_{\gamma,\Sigma}(\theta)}
\end{align}
where $m \in [d_\Theta], n\in [d]$.
\end{lemma}

The second term can also be written as an explicit matrix using vectorization. Notice that
\begin{align*}
\tr\left(\Sigma^{-1} (\del_{\theta_m}\Sigma)\Sigma^{-1}\del_{\theta_n}\Sigma\right)&= [\VEC (\Sigma^{-1} (\del_{\theta_m}\Sigma))]^\T\VEC (\Sigma^{-1} (\del_{\theta_n}\Sigma))\\
&=[(I_d\otimes \Sigma^{-1})\VEC  (\del_{\theta_m}\Sigma)]^\T[(I_d\otimes \Sigma^{-1})\VEC  (\del_{\theta_n}\Sigma)]\\
&=\VEC  (\del_{\theta_m}\Sigma)^\T (I_d \otimes \Sigma^{-2}) \VEC  (\del_{\theta_n}\Sigma)
\end{align*}
It follows that
\begin{align*}
\frac{1}{2}\tr\left(\Sigma^{-1} (\del_{\theta_m}\Sigma)\Sigma^{-1}\del_{\theta_n}\Sigma\right)_{m,n} &= \frac{1}{2} [\dop_\theta \VEC\Sigma(\theta)]^\T(I_d\otimes \Sigma^{-2}(\theta)) \dop_\theta \VEC\Sigma(\theta)
\end{align*}
so that
\begin{align*}
\I_{\gamma}(\theta) = [\dop_\theta \mu(\theta)]^\T[\Sigma(\theta)]^{-1} \dop_\theta \mu(\theta)+\frac{1}{2} [\dop_\theta \VEC\Sigma(\theta)]^\T(I_d\otimes \Sigma^{-2}(\theta)) \dop_\theta \VEC\Sigma(\theta).
\end{align*}

\paragraph{The Chain Rule}
We next recall the chain rule. Consider the Fisher information as defined in \eqref{eq:fisherdef} for a bivariate density $p_\theta(x,y)=p(x,y|\theta)$. Define the conditional Fisher information as
\begin{align*}
\I_{p(x|y)}(\theta) \triangleq \int \int \nabla_\theta \{  \log p_\theta(x|y)\} \left[\nabla_\theta\{  \log p_\theta(x|y)\} \right]^\T p_\theta(x|y)dx p_\theta(y)  dy
\end{align*}
Suppose that the scores $\nabla_\theta \log p_\theta(y)$ and  $\nabla_\theta \log p_\theta(x|y)$  have mean zero (under $ p_\theta(y)dy$ and for every $y$ under $p_\theta(x|y)dx$ respectively). Then:
\begin{multline}
\begin{aligned}
\label{eq:fichain}
\I_{p(x,y)}(\theta)&= \int \int \nabla_\theta \log p_\theta(x,y)\left[\nabla_\theta \log p_\theta(x,y)\right]^\T p_\theta(x,y) dxdy \\
 &= \int \int \nabla_\theta \log \{ p_\theta(x|y) p_\theta(y)\} \left[\nabla_\theta \log \{p_\theta(x|y) p_\theta(y)\} \right]^\T p_\theta(x|y) p_\theta(y)  dxdy\\
 &=\int \int \nabla_\theta \{  \log p_\theta(x|y)+\log p_\theta(y)\} \left[\nabla_\theta \{  \log p_\theta(x|y)+\log p_\theta(y)\} \right]^\T p_\theta(x|y) p_\theta(y)  dxdy\\
 &=\int \int \nabla_\theta \{  \log p_\theta(x|y)\} \left[\nabla_\theta\{  \log p_\theta(x|y)\} \right]^\T p_\theta(x|y)dx p_\theta(y)  dy\\
 &+\int \int \nabla_\theta \{  \log p_\theta(y)\} \left[\nabla_\theta  \{  \log p_\theta(y)\} \right]^\T   p_\theta(x|y)   p_\theta(y)dydx\\
 &+\textnormal{mixed terms linear in the expectation of score functions}\\
 &=\I_{p(x|y)}(\theta) + \I_{p(x)}(\theta).
 \end{aligned}
\end{multline}

The following lemma establishes sufficient regularity for the score function to have mean zero. We state the result for an unconditional density $p_\theta(\cdot)$ (otherwise apply the result conditionally) that satisfies the following regularity conditions:
\begin{enumerate}
\item[F1.] The densities $p_\theta(y)$ are $y$-almost everywhere  differentiable in $\theta$.
\item[F2.]  The gradients $\nabla_\theta p_\theta(y)$ are absolutely integrable. 
\item[F3.]  The model has finite fisher information; $\tr \I_p(\theta) < \infty$. 
\end{enumerate}

The technical significance of F1-F3 is that they permit the use of the Leibniz rule for exchange of differentiation and integration.

\begin{lemma}
\label{lem:fisherreg}
Assume that F1-F3 hold, then any measurable function $T$ of $y$ with $\E_\theta T^2(y) < \infty$ satisfies
\begin{align}
\label{ch2:leibniz}
\nabla_\theta \int T(y) p_\theta(y) dy =  \int T(y)  \nabla_\theta p_\theta(y) dy = \int T(y) [\nabla_\theta \log p_\theta(y)] p_\theta(y) dy.
\end{align}
\end{lemma}

\begin{proof}
Note that Lebesque integrability of $p_\theta(y)$ and $T(y) p_\theta(y)$ are immediate since $p_\theta(y)$ is a density and  $\E_\theta T^2(y) < \infty$. Together with this observation F1 and F2 are sufficient to conclude by dominated convergence that a Leibniz rule applies to $p_\theta(y)$. Hence, we need to establish that corresponding conditions apply to $T(y) p_\theta(y)$. Obviously, $T(y) p_\theta(y)$ is $y$-almost everywhere  differentiable in $\theta$ since $T$ is independent of $\theta$. It thus suffices to prove the absolute integrability of $T(y) \nabla_\theta p_\theta(y)$. To establish this, write
\begin{equation*}
\begin{aligned}
\|T(y) \nabla_\theta p_\theta(y) \|_{L^1(dy)} &= \|T(y) [\nabla_\theta \log p_\theta(y)] p_\theta(y) \|_{L^1(dy)} \\
&= \|\sqrt{p_\theta(y)} T(y) [\nabla_\theta \log p_\theta(y)] \sqrt{p_\theta(y)} \|_{L^1(dy)}\\
&\leq \|\sqrt{p_\theta(y)} T(y) \|_{L^2(dy)} \| [\nabla_\theta \log p_\theta(y)] \sqrt{p_\theta(y)} \|_{L^2(dy)} \\
&= \sqrt{\E_\theta T^2(y) \tr \I_q(\theta)}
\end{aligned}
\end{equation*}
where the inequality follows by Cauchy-Schwarz. The RHS is finite by hypothesis F3.
\end{proof}

We state \Cref{lem:fisherreg} for completeness, but remark that we are mostly interested in the case $T=1$. In particular, for $T=1$, in which case only F1 and F2 are needed, \eqref{ch2:leibniz} becomes
\begin{align}
\label{ch2:scoremeanzero}
\int \nabla_\theta \log p_\theta(y)  p_\theta (y) dy = \int \frac{\nabla_\theta p_\theta (y)}{p_\theta (y)} p_\theta(y) dy =\int \nabla_\theta p_\theta (y) dy = \nabla_\theta \int p_\theta(y) dy=0.
\end{align}
In principle, this implies that under rather mild regularity conditions, the chain rule for Fisher information \eqref{eq:fichain} can be used.

\end{document}